\documentclass[conference]{IEEEtran}

\def\anonymous{0}

\usepackage{preamble}

    \usepackage{breqn}
    \makeatletter
    \newcommand{\removelatexerror}{\let\@latex@error\@gobble}
    \makeatother

\makeatletter
\newcommand*{\algrule}[1][\algorithmicindent]{%
  \makebox[#1][l]{%
    \hspace*{.2em}%
  }
}

\newcount\ALG@printindent@tempcnta
\def\ALG@printindent{%
    \ifnum \theALG@nested>0% is there anything to print
        \ifx\ALG@text\ALG@x@notext% is this an end group without any text?
            % do nothing
        \else 
            \unskip
            % draw a rule for each indent level
            \ALG@printindent@tempcnta=1
            \loop
            \algrule[\csname ALG@ind@\the\ALG@printindent@tempcnta\endcsname]%
            \advance \ALG@printindent@tempcnta 1
            \ifnum \ALG@printindent@tempcnta<\numexpr\theALG@nested+1\relax
                \repeat
        \fi
    \fi
}
\patchcmd{\ALG@doentity}{\noindent\hskip\ALG@tlm}{\ALG@printindent}{}{\errmessage{failed to patch}}
\patchcmd{\ALG@doentity}{\item[]\nointerlineskip}{}{}{} %
\makeatother

\newcommand\Algphase[1]{%
\vspace{0.1in}
\Statex\hspace*{-\algorithmicindent}\textbf{#1}%
\vspace{0.01in}
}

\usepackage{cleveref}  %

\title{Tukey Depth Mechanisms for Practical Private Mean Estimation
\thanks{Thank you to our generous sponsors.}
}

    \ifnum\anonymous=1
    \author{}
    \else
    \author{\IEEEauthorblockN{Gavin Brown}
    \IEEEauthorblockA{\textit{Paul G. Allen School of Computer Science and Engineering} \\
    \textit{University of Washington}\\
    Seattle, USA \\
    grbrown@cs.washington.edu}
    \and
    \IEEEauthorblockN{Lydia Zakynthinou}
    \IEEEauthorblockA{\textit{Simons Institute for the Theory of Computing} \\
    \textit{University of California, Berkeley}\\
    Berkeley, USA \\
    lydiazak@berkeley.edu}
    }

\begin{document}

\maketitle

\begin{abstract}
Mean estimation is a fundamental task in statistics and a focus within differentially private statistical estimation. While univariate methods based on the Gaussian mechanism are widely used in practice, more advanced techniques such as the exponential mechanism over quantiles offer robustness and improved performance, especially for small sample sizes. Tukey depth mechanisms 
carry
these advantages to multivariate data, 
providing similar strong theoretical guarantees. However, practical implementations fall behind these theoretical developments. 

In this work, we take the first step to bridge this gap by implementing the (Restricted) Tukey Depth Mechanism, a theoretically optimal mean estimator for multivariate Gaussian distributions, yielding improved practical  methods for private mean estimation. Our implementations enable the use of these mechanisms for small sample sizes or low-dimensional data. Additionally, we implement variants of these mechanisms that use approximate versions of Tukey depth, trading off accuracy for faster computation. We demonstrate their efficiency in practice, showing that they are viable options for modest dimensions. Given their strong accuracy and robustness guarantees, we contend that they are competitive approaches for mean estimation in this regime. We explore future directions for improving the computational efficiency of these algorithms by leveraging fast polytope volume approximation techniques, paving the way for more accurate private mean estimation in higher dimensions.\footnote{Code available at \url{https://github.com/gavinrbrown1/private-tukey}.}
\end{abstract}
\begin{IEEEkeywords}
differential privacy, practical implementations, mean estimation, Tukey depth
\end{IEEEkeywords}

\section{Introduction}

Mean estimation is one of the most ubiquitous tasks in statistics.
As a result, it is a focus within differentially private~\cite{DworkMNS06} statistical estimation, with a rapidly growing body of work. 
The central test case for differentially private mean estimators is that of Gaussian distributions $\cN(\mu,\Sigma)$.
The complexity of this task under privacy is well-studied from a theoretical perspective. 
For univariate data, there are a number of practical implementations.
The most basic, such as the Laplace or Gaussian mechanisms, follow the ``clip-and-noise'' recipe.
Even better, algorithms such as the exponential mechanism over quantiles have stronger theoretical guarantees and perform well even at small samples sizes.
On multivariate data, however, practice lags significantly behind theory.
In this work, we take an algorithm from the theoretical privacy literature and implement it, yielding a practical private method for low dimensions that has strong robustness and small-sample performance.
We propose modifications to trade off computation and accuracy as the dimension grows.
Finally, we chart a path to faster computation in higher dimensions and provide technical tools to that end.
To do this, we unite several lines of research from theory and practice on differential privacy, robust statistics, and volume computation.

Theoretically, the best-known differentially private algorithms for multivariate Gaussian distributions come from a line of work drawing on robust statistics.
Coupled with work on lower bounds, they establish a nearly tight characterization of the number of samples needed for this task.
For a distribution $\cN(\mu,\Sigma)$, consider the error of an estimate $\hat\mu$ in the \emph{Mahalanobis norm}:
\[
    \norm{\hat\mu-\mu}{\Sigma} \defeq \norm{\Sigma^{-1/2}(\hat\mu - \mu)}{2}.
\]
This norm tightly captures the uncertainty of the empirical mean and represents an error guarantee which is invariant under affine transformations of the data.
The algorithm with the lowest error for this task comes from \cite{brown2021covariance}; we sketch an outline in Algorithm~\ref{alg:discrete_tukey_vol_overview}.
This algorithm is $(\eps,\delta)$-differentially private and, given $n$ independent samples from $\mc{N}(\mu,\Sigma)$, produces $\tilde\mu$ such that $\norm{\tilde\mu-\mu}{\Sigma}\le \alpha$ as long as
\begin{equation}\label{eq:bgsuz_tukey_error}
    n\gtrsim \frac{d}{\alpha^2} + \frac{d}{\alpha\eps} + \frac{\log 1/\delta}{\eps}.
\end{equation}
Here ``$\gtrsim$'' hides (modest) absolute constants and an additive $\frac{\log(1/\alpha \eps)}{\alpha \eps}$ term; the error guarantee holds with high constant probability.
As we discuss in \Cref{app:related-work}, a number of algorithms achieve a similar flavor of guarantee; all are based on techniques from robust statistics and all require exponential or high polynomial running time, the latter achieved through the sum-of-squares paradigm.
The sample complexity in~\eqref{eq:bgsuz_tukey_error} is nearly tight, matching known lower bounds up to constants and the suppressed additive term \cite{karwa2017finite,kamath_KLSU19,kamath2022new}.

Among practical methods for multivariate data, the Gaussian mechanism and its kin reign supreme.
In its canonical form, one \emph{clips} (i.e., projects) the data to an $\ell_2$ ball of radius $R$, computes the empirical mean, and adds appropriately scaled spherical Gaussian noise.
This is $(\eps,\delta)$-differentially private and, on distributions with identity covariance where the clipping has little impact, produces an estimate $\tilde\mu$ such that $\norm{\tilde\mu - \mu}{2}\le \alpha$ as long as
\[
    n \gtrsim \frac{d}{\alpha^2} + \frac{R \sqrt{d\log 1/\delta}}{\alpha \eps}.
\] 
Here ``$\gtrsim$'' hides an absolute constant; the error guarantee holds with high constant probability.
Compared to~\eqref{eq:bgsuz_tukey_error}, we have sacrificed the Mahalanobis error guarantee and also, for $R\approx \sqrt{d}$, suffer an extra factor of $\sqrt{\log (1/\delta)}$, which may be a high price for small sample sizes.
Even achieving $R\approx \sqrt{d}$ requires strong prior knowledge about the location of the distribution, which may be unavailable.
A number of recent works can be seen as providing sophisticated clipping algorithms.
These may improve the dependence on hyperparameters such as $R$ \cite{biswas2020coinpress,HuangLY21, tsfadia2022friendlycore, aumuller2023plan,dagan2024dimension} or give error guarantees other than $\ell_2$ \cite{brown2023fast,duchi2023fast}, but the increase in complexity increases the error by (sometimes large) constant factors.

In this work, we take the first step to bring the theoretically best-known differentially private algorithms for multivariate Gaussian mean estimation to the realm of practical approaches. The estimator achieving the optimal rate in~\eqref{eq:bgsuz_tukey_error} is called the \emph{Restricted Tukey Depth Mechanism}, introduced in the 2021 work of Brown, Gaboardi, Smith, Ullman, and Zakynthinou (henceforth BGSUZ)~\cite{brown2021covariance}.  

The Restricted Tukey Depth Mechanism has several desirable properties: it is robust, it achieves optimal accuracy (up to logarithmic terms) with respect to the tight, affine-invariant Mahalanobis error metric, and it does not require prior knowledge about the parameters of the distribution. (See~\Cref{app:related-work} for related work on differentially private Gaussian mean estimation and comparisons.) 

\subsection{Background: The (Restricted) Tukey Depth Mechanism}
The \emph{Tukey depth} \cite{tukey1975mathematics} of a point $y\in \mathbb{R}^d$ with respect to a dataset $x\in \mathbb{R}^{n\times d}$ is a classic notion of outlyingness in multivariate data:
\begin{equation}
T_x(y) =  \cdot\min_{v\in\mathbb{R}^d}  \Big|\big  \{x_i \in x : \langle x_i, v \rangle \ge \langle y, v\rangle\big \} \Big|,
    \label{eq:tukey_depth_def}
\end{equation}
It generalizes the notion of univariate quantiles and has a long history within robust statistics.
As we can see in~\Cref{fig:tukey_example}, points closer to the center of the data have higher Tukey depth. 
Any point with maximal depth is called a \emph{Tukey median}: such points are robust and accurate mean estimators for Gaussian distributions.
See \cite{nagy2022halfspace} for a recent focused survey.

\begin{figure}[t]
\centerline{
\includegraphics[width=0.8\columnwidth]{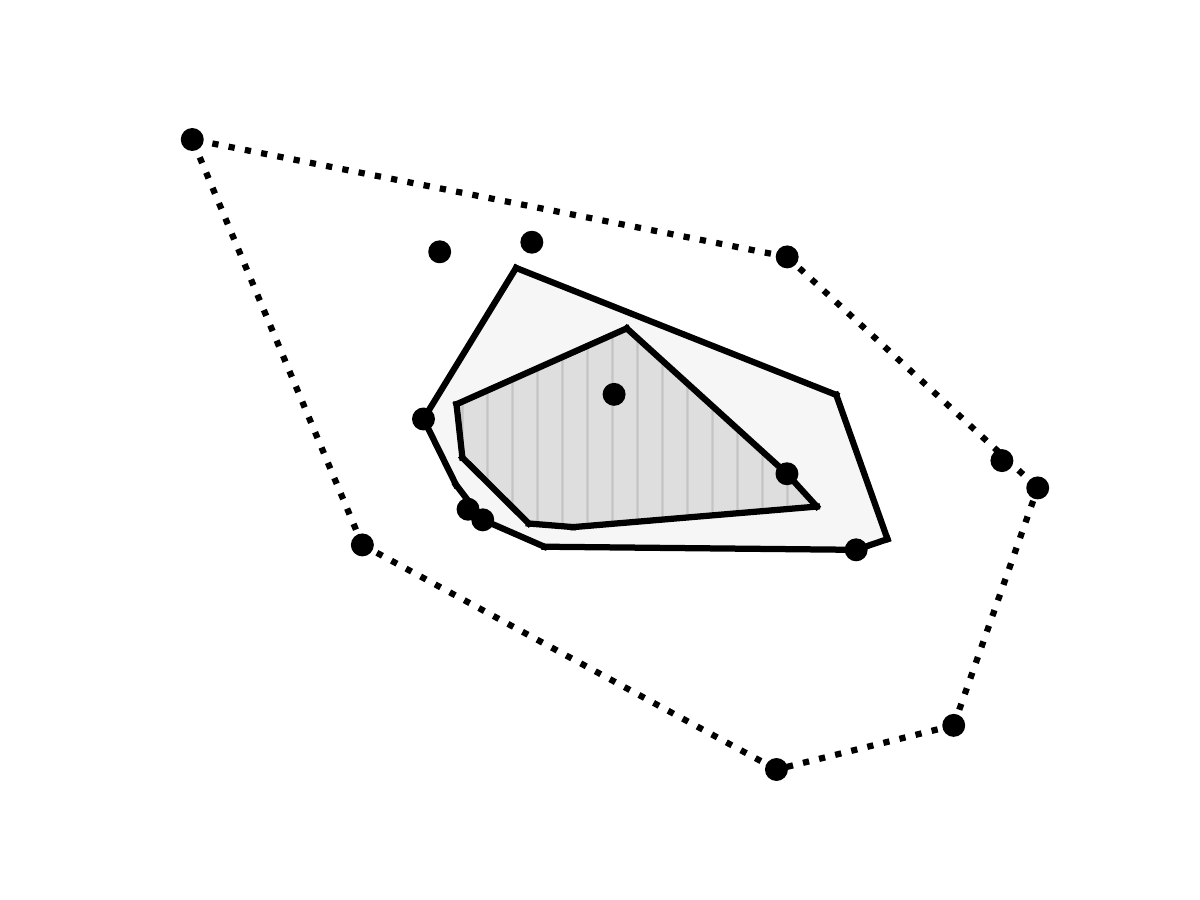}}
\caption{Tukey depth is a multivariate notion of centrality. The convex hull of a dataset $x$ is exactly the set of points $y$ with $T_x(y)>0$, i.e., nonzero depth. The light gray region is the set of points of depth at least 3; inside that is the set of points of depth at least 4 (gray and hatched). Note that Tukey depth is defined for any point in $\mathbb{R}^d$, not just elements of $x$.}
\label{fig:tukey_example}
\end{figure}

If we seek to privately find a point of high Tukey depth, a natural candidate is the \emph{exponential mechanism}~\cite{mcsherry2007mechanism}, since Tukey depth has sensitivity one. 
In its general form, the exponential mechanism produces an output $y\in \mathcal{Y}$ from a distribution proportional to $\exp\{\eps\cdot  q_x(y)\}$, where $\eps$ is a privacy parameter and $q_x(y)$ is a \emph{score} function that depends on the input dataset $x$.

To instantiate this approach with $q_x(y)=T_x(y)$, we require a bounded output space (so that $\int \exp\{\eps\cdot q_x(y)\} \mathrm{d}y$ has a finite integral).
One can achieve this given prior knowledge about the data, for example that it lies in a hypercube $[-R,R]^d$.\footnote{Informally, error bounds for algorithms that satisfy concentrated or pure DP require such assumptions.}
Kaplan, Sharir, and Stemmer~\cite{kaplan2020find} used this approach to solve the problem of finding a point in the convex hull of a dataset.
Liu, Kong, Kakade, and Oh\cite{liu2021robust} use the same approach for robust and pure DP mean estimation.
We will refer to this algorithm as \emph{BoxEM}.

The {\em Restricted Exponential Mechanism} (REM) of BGSUZ offers another solution.
It restricts the mechanism to never return values of $y$ with $q_x(y) < t$ for some threshold $t$.
This modification avoids the need for a parameter bound $R$ but, since the restriction is data-dependent, the standard privacy proof for the exponential mechanism no longer applies.
To work around this, BGSUZ introduce a notion of ``safe'' inputs, defined abstractly as precisely those datasets on which one can run this restricted mechanism while preserving privacy.
They compute the Hamming distance from the input data to the space of ``unsafe'' datasets and apply the propose-test-release (PTR) framework of \cite{DworkL09}. 
This approach requires approximate DP, unlike BoxEM (which satisfies pure DP). The outline of REM is presented in~\Cref{alg:discrete_tukey_vol_overview}. 

\begin{figure}[!t]
\removelatexerror
\begin{algorithm}[H] \caption{REM with Tukey Depth%
}\label{alg:discrete_tukey_vol_overview}
\begin{algorithmic}[1]

\Statex \textit{N.B. This is an algorithmic blueprint; the notion of (Hamming) ``distance to unsafety'' will be specified later.}
\Statex

\Require{Dataset $x = (x_1,\dots,x_{n})^T \in \mathbb{R}^{n\times d}$.
Privacy parameters: $\eps,\delta>0$. Minimum threshold $t$.}

\Algphase{PTR check}

\State $h(x) \gets \mathtt{DistanceToUnsafety}_{(\eps,\delta,t)}(x)$
\If{$h(x) + \mathrm{Lap}(\frac{1}{\eps})< \frac{\log(1/2\delta)}{\eps}$} 
    \Return \texttt{FAIL}. 
\EndIf

\Algphase{Sampling from REM: $\cM_{\eps,t}$}
\State \Return $\tilde\mu$ w.p. $p(y) \propto \begin{cases}
        \exp\left\{\frac{\eps}{2} \cdot  T_x(y)\right\} & \text{if $T_x(y)\ge t$} \\
        0 & \text{otherwise}
        \end{cases}$ 
\end{algorithmic}
\end{algorithm}
\end{figure}

The main drawback of both algorithms is computational inefficiency. 
Computing $T_x(y)$ for even a single point is NP-hard~\cite{johnson1978densest}; the fastest known algorithms take time $n^{d-1}$ and are thus unusable in high dimensions.
Even in low dimensions, it is not immediately clear how to sample efficiently.
Furthermore, as we discuss later, REM's abstract distance-to-unsafety calculation presents an obstacle to implementation even in one dimension.

\subsection{Our Contributions: Practical Implementations of the (Restricted) Tukey Depth Mechanism} We implement and evalutate both REM and BoxEM over Tukey depth.
We also implement variants that use approximate versions of Tukey depth, trading off accuracy for faster computation.
We show their efficiency in practice, establishing that they are viable options in modest dimensions. Given their robustness, affine-invariance, and strong accuracy guarantees, we contend that they are competitive approaches for mean estimation in this regime. 

\begin{enumerate}
\item \textbf{Exact implementation.} Relying on advances by Amin, Joseph, Ribero, and Vassilvitskii~\cite{amin2022easy}, we implement the distance-to-unsafety check, bringing its running time to 
$n^{O(d^2)}$. %
We test this algorithm in 2, 3, and 4 dimensions.

\item \textbf{Random Tukey depth.} 
Cuesta-Albertos and Nieto-Reyes \cite{cuesta2008random} propose the \emph{random Tukey depth}: replace the minimum over $v\in \mathbb{R}^d$ in~\eqref{eq:tukey_depth_def} with a minimum over a set of $k$ vectors chosen randomly from the unit sphere. 
We incorporate this into our algorithms, yielding an asymptotic running time of $n\directions^d$ and a significant speed-up in practice.
\end{enumerate}

In~\Cref{sec:approximate_volume}, we discuss the natural next step of incorporating approximate polytope volume computation as a path to higher dimensions.

\section{Preliminaries}\label{app:preliminaries}

We work with differentially privacy, the de facto standard for privacy protection in data analysis.
Differentially private algorithms are randomized algorithms whose output distributions are indistinguishable on any two adjacent datasets.
To present the definition, we now formalize the notions of \emph{indistinguishable} and \emph{adjacent}.
\begin{defn}[Indistinguishability]
    For any $\eps,\delta\ge 0$, we say that two distributions $p$ and $q$ over a domain $\mc{U}$ are \emph{$(\eps,\delta)$-indistinguishable}, denoted $p\approx_{(\eps,\delta)} q$, if for all measurable $E\subseteq \mc{U}$ we have
    \[
        p(E) \le e^{\eps}q(E) + \delta\quad\text{and}\quad q(E)\le e^{\eps}p(E)+\delta.
    \]
\end{defn}
\begin{defn}[Adjacency]
    For a set $\mc{X}$ and natural number $n$, we say that $x=(x_1,\ldots,x_n)\in\mc{X}^n$ and $x'=(x_1',\ldots,x_n')\in \mc{X}^n$ are \emph{adjacent} if there exists an $i^*$ such that, for all $i\neq i^*$, $x_i=x_i'$.
\end{defn}
\begin{defn}[Differential Privacy]
    A randomized algorithm $A : \mc{X}^n\to \mc{U}$ is \emph{$(\eps,\delta)$-differentially private} if, for all adjacent $x,x'\in\mc{X}^n$, we have $A(x)\approx_{(\eps,\delta)} A(x')$.
\end{defn}
The case of $\delta=0$ is called \emph{pure} differential privacy, while $\delta>0$ is called \emph{approximate}.
Our notion of adjacency is sometimes referred to as ``swap,'' in contrast with ``add/remove.''

The algorithms we implement are provably robust: they will be accurate on data drawn from Gaussian distributions, even if an adversary is allowed to arbitrarily change a constant fraction of the data.
We formalize this corruption model now.
\begin{defn}[Strong Contamination]\label{def:strong-contamination}
    Let $x=(x_1,\ldots,x_n)\in \mc{X}^n$ and $x'=(x_1',\ldots,x_n')\in\mc{X}^n$ be datasets and consider $\alpha\in (0,1)$.
    We say that $x'$ is an \emph{$\alpha$-corruption of $x$} if there exists a set $S\subset [n]$ with $|S|\le \alpha n$ such that, for all $i\notin S$, $x_i=x_i'$.
\end{defn}
Note that this definition allows an adversary to inspect the data $x$ and select which points to replace and how to replace them.

Finally, we present some crucial geometric definitions and facts.
A \emph{halfspace} $H_{(a,b)}$ is a subset of $\mathbb{R}^d$, parameterized by $a\in \mathbb{R}^d$ and $b\in \mathbb{R}$ and defined as all $y$ such that $\langle a, y\rangle\le b$.
A \emph{polytope} $P\subseteq \mathbb{R}^d$ is an intersection of a finite number of halfspaces: we collect $k$ halfspaces $\{(a_i, b_i)\}_{i}$ into a matrix-vector pair $(A,b)$ and write $P = \{y : Ay\le b\}$.
This is called the \emph{H-representation} of $P$.
Alternatively, a polytope can be described as the convex hull of a set of points, or vertices; this is called the \emph{V-representation} of $P$.

For a fixed dataset $x$, we call the set of points of Tukey depth at least $\ell$ the \emph{Tukey upper-level set}, denoted $\mc{Y}_{\ge \ell}$ as the dataset will be clear from context.
The set $\mc{Y}_{\ge \ell}$ is a polytope. 
It is easy to see that it is the intersection of halfspaces: infer from~\eqref{eq:tukey_depth_def} that each direction $v$ defines a halfspace that depends on the quantiles of the projections $\{\langle x_i,v\rangle\}_{i\in[n]}$.
We do not need infinitely many vectors $v$ to describe the intersection, roughly $n^{d-1}$ suffice~\cite{liu2019fast}.
The set $\mc{Y}_{\ge 1}$ is exactly the convex hull of the data.
We will use the notation $\mc{Y}_{=\ell}$ to denote the set of points with depth exactly $\ell$.

\section{Related Work}\label{app:related-work}

There is a great deal of work on differentially private Gaussian mean estimation.
There are many axes on which to compare algorithms, representing tradeoffs between running time, robustness, type of privacy guarantee, and sample complexities.
Algorithms also differ in the style of error guarantee and the assumptions they make about the data distribution.
This section provides a brief overview of this prior work, with emphasis on algorithms that can be implemented. 
We restrict our review to the central model of differential privacy, where the data is held by a single curator.

Karwa and Vadhan~\cite{karwa2017finite} first established approaches for learning univariate Gaussians with optimal error, applying the Gaussian mechanism and stability-based techniques from~\cite{DworkL09}.
For isotropic multivariate data $\mc{N}(\mu,\mathbb{I})$ (or, equivalently, $\mc{N}(\mu,\Sigma)$ with $\Sigma$ known) under the assumption that the true mean has $\ell_2$ norm at most $R$, the naive Gaussian mechanism has error that scales linearly with $R$.
Later papers~\cite{kamath_KLSU19,biswas2020coinpress} give estimators with near-optimal error for this setting, with error depending logarithmically on a priori bounds on the range $R$ of the data.

References~\cite{HuangLY21, tsfadia2022friendlycore} give polynomial-time algorithms for private aggregation, with mean estimation as a central application. 
These 
aim to minimize the dependence of the error on the range of the data. 
In the known-covariance case, these have guarantees comparable to~\cite{biswas2020coinpress}. 
Taking an alternative approach, \cite{aumuller2023plan, dagan2024dimension} tailor the amount of noise dimension-by-dimension to dataset's variance, yielding improved guarantees for low-rank distributions, when the covariance matrix is (almost) known. 

In the unknown-covariance setting, one might aim for error guarantees in Mahalanobis distance.
The concurrent works of \cite{duchi2023fast,brown2023fast} gave the first polynomial-time differentially private algorithms achieving this goal with sample complexity linear in $d$.
Their algorithms' errors have no dependence on the range of the data or condition number.
Both algorithms are implementable but have involved analyses that result in significant constants in their privacy guarantees, yielding poor small-sample performance.

From the above techniques, we experimentally compare with the Gaussian mechanism and with CoinPress \cite{biswas2020coinpress}. 
Huang, Liang, and Yi~\cite{HuangLY21} and Aum{\"u}ller, Lebeda, Nelson, and Pagh~\cite{aumuller2023plan} also release code but, in the setting of our simulations, perform similarly. 
These algorithms, and all described so far, end by introducing Gaussian noise. 

In contrast, our work sits within an established literature connecting differential privacy to robust statistics through the exponential mechanism.\footnote{See  \cite{avella2023differentially,yu2024gaussian} for insightful discussion on an alternative path, privatizing M-estimators through optimization.}
Formal statements of these connections appeared in \cite{DworkL09}, who introduced the \emph{propose-test-release} framework and, among other applications, analyzed the exponential mechanism over quantiles for univariate mean estimation.
This method performs very well in practice; similar ideas drive the DP Theil-Sen estimator for simple linear regression \cite{alabi2020differentially,sarathy2022analyzing}.
See~\Cref{sec:univariate_experiments} for more on univariate estimation.
A number of papers extend these connections, exploring when one can turn robust statistical algorithms into private ones and vice versa \cite{asi2020instance, liu2022differential,georgiev2022induces,hopkins2023robustness,alabi2022privately,asi2023robustness}. 
None of these algorithms seem amenable to implementation.

The Tukey depth mechanisms we consider represent a natural generalization of the univariate exponential mechanism run on quantiles.
Kaplan, Sharir, and Stemmer~\cite{kaplan2020find} applied this approach to the task of producing a point within the convex hull of the input dataset.
Liu, Kong, Kakade, and Oh~\cite{liu2021robust} and BGSUZ applied it to the the unknown covariance case, giving guarantees in Mahalanobis distance under sample complexity depending optimally on the dimension $d$. 
The application of~\cite{liu2021robust} requires prior knowledge of parameter bounds in the form of a bounding box $[-R,+R]^d$.
Its accuracy depends logarithimcally on $R$, whereas the Restricted Tukey Depth Mechanism of BGSUZ is free of any dependence on parameter bounds. 
Both approaches are computationally inefficient. 
Other work on computing depth functions privately includes \cite{ramsay2021differentially,cumings2022differentially,ramsay2023differentially}. 

\paragraph{Properties of the (Restricted) Tukey Depth Mechanism} The Restricted Tukey Depth Mechanism has several desirable properties.
First, it does not require any prior knowledge of parameters of the distribution, such as the covariance matrix $\Sigma$, its condition number $\kappa$, or a range $R$ such that $\|\mu\|_2\leq R$. 
This allows the data analyst to use it without spending privacy budget to estimate these hyperparameters (or guessing them, affecting its accuracy). 
Additionally, not only does the algorithm not need to know $R$, but also its accuracy does not depend on it at all.

Second, it is invariant under invertible affine transformation. If one translates, stretches, and/or rotates the data, runs the algorithm, and then reverses the transformation on the output, the end-to-end algorithm does not change. 
This follows from the fact that Tukey depth itself is affine-invariant.

Third, it has asymptotically optimal accuracy guarantees (Theorem~\ref{thm:BGSUZ}). 
The guarantee holds with respect to Mahalanobis distance $\|\hat{\mu}-\mu\|_\Sigma=\|\Sigma^{-1/2}(\hat{\mu}-\mu)\|_2$, an error metric which tightly captures the uncertainty of the true mean and characterizes the total variation distance between $\cN(\mu, \Sigma)$ and $\cN(\hat{\mu}, \Sigma)$ up to constants. 
Mahalanobis error $\alpha$ implies a Euclidean guarantee of $\|\hat\mu - \mu\|_2 \leq \alpha \sqrt{\|\Sigma\|_2}$. 
\begin{thm}[Theorem 3.2~\cite{brown2021covariance}]\label{thm:BGSUZ}
    For any $\eps,\delta >0$, the Restricted Tukey Depth Mechanism is $(\eps, \delta)$-differentially private.
    There exists an absolute constant $C$ such that, for any $0<\alpha,\beta,\eps < 1$, $0<\delta\le \frac{1}{2}$, mean $\mu$, and positive definite $\Sigma$, if $x\sim \cN(\mu,\Sigma)^{\otimes n}$ and
    \begin{equation}
        n \ge C\left(\frac{d + \log 1/\beta}{\alpha^2} + \frac{d + \log(1/\alpha\eps \beta)}{\alpha\eps} +  \frac{\log 1/\delta}{\eps} \right),
    \end{equation}
    then with probability at least $1-\beta$, $\|\cA(x)-\mu\|_{\Sigma}\leq\alpha$.
\end{thm}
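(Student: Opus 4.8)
The plan is to establish the privacy and accuracy claims through separate arguments, since they draw on disjoint toolkits: privacy rests on the propose-test-release (PTR) framework together with the notion of ``safe'' datasets, while accuracy rests on the concentration geometry of Tukey level sets for Gaussian data.

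For privacy, I would first note that the PTR test is the standard gadget: $h(x)$ is a Hamming distance to a fixed set of datasets, hence has sensitivity one, so $h(x)+\mathrm{Lap}(1/\eps)$ thresholded at $\log(1/2\delta)/\eps$ releases the FAIL/proceed bit with $(\eps,\delta)$ privacy and, crucially, fails closed whenever $x$ is close to an unsafe dataset. The substance lies in defining ``safe'' so that it certifies privacy of the release step. For adjacent $x,x'$ the likelihood ratio of $\cM_{\eps,t}$ factors as $e^{(\eps/2)(T_x(y)-T_{x'}(y))}\cdot Z(x')/Z(x)$, where $Z(x)=\int_{T_x(y)\ge t}\exp((\eps/2)T_x(y))\,dy$. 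Because Tukey depth has sensitivity one, the first factor is at most $e^{\eps/2}$; the difficulty absent from the vanilla exponential mechanism is that the data-dependent restriction $T_x(y)\ge t$ can make the supports—and hence the normalizers $Z(x),Z(x')$—incomparable when $y$ sits near the threshold boundary. I would define $x$ to be safe precisely when this cannot happen for it or any neighbor, and then show that a large value of $h(x)$ means $x$ is far in Hamming distance from any such boundary configuration, forcing $Z(x')/Z(x)\le e^{\eps/2}$ and restoring the full $e^{\eps}$ bound. Composing the private test with the release—pure $\eps$-DP on safe inputs—via the PTR analysis, and allocating the budget between the two stages, yields the claimed $(\eps,\delta)$ guarantee.

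For accuracy, I would first use uniform convergence over halfspaces (VC dimension $d+1$) to pin down the geometry: with $n\gtrsim (d+\log(1/\beta))/\alpha^2$ samples, every empirical halfspace mass tracks its Gaussian value, so points within Mahalanobis distance $\alpha$ of $\mu$ have depth near the maximum $\approx n/2$, while points at distance $\gg\alpha$ have depth bounded away from $n/2$. This confines the high-depth region to a small Mahalanobis ball around $\mu$ and fixes the threshold $t$ at a constant fraction of $n$ so the good region survives the restriction. Next I would check that the PTR test passes with probability $1-\beta$, which is where the $\log(1/\delta)/\eps$ samples enter: they guarantee $h(x)$ exceeds the threshold with slack. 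Conditioned on passing, I would bound the output of $\cM_{\eps,t}$ by a volume-ratio argument: the probability of returning a point whose depth falls short of the maximum by $k$ is controlled by $e^{-\eps k/2}$ times the ratio of level-set volumes. For Gaussian data the sets $\mc{Y}_{\ge\ell}$ are approximately ellipsoidal with volume growing polynomially of degree $d$ in the depth deficit, so the exponential decay $\eps k$ overwhelms the $d\log(\cdot)$ volume growth once $k\gtrsim d/\eps$ (plus a $\log(1/\beta)$ tail term). A depth deficit $k$ corresponds to Mahalanobis error $\approx k/n$, which turns $k\gtrsim d/\eps$ into the sample term $n\gtrsim d/(\alpha\eps)$ and contributes the additive $\frac{\log(1/\alpha\eps)}{\alpha\eps}$ correction. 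Translating the output's high-depth guarantee back through the first step's geometry gives $\|\cA(x)-\mu\|_{\Sigma}\le\alpha$.

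I expect the privacy argument to be the main obstacle—specifically, pinning down the abstract definition of safety and proving that a large distance-to-unsafety tames the data-dependent normalizer $Z(x')/Z(x)$ despite the support mismatch at the restriction boundary. By contrast, the accuracy bound, while technical, follows the comparatively standard concentration-then-volume-ratio template, with the three sample-complexity terms arising transparently from statistical concentration, the exponential-mechanism volume tradeoff, and the PTR test respectively.
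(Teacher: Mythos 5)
Your accuracy argument tracks the actual proof closely: uniform convergence of empirical Tukey depth over halfspaces, the identity $T_{\cN(\mu,\Sigma)}(y)=\Phi(-\|y-\mu\|_{\Sigma})$ confining high-depth points to a small Mahalanobis ball, and a volume-ratio bound showing the exponential weight $e^{\eps k/2}$ beats the polynomial-in-$d$ growth of level-set volumes once $k\gtrsim d/\eps$. This is exactly the template instantiated in the appendix (Theorem~\ref{thm:BoxEM_accuracy}, which follows BGSUZ's Theorem 3.2 step by step), and your accounting of where each of the three sample-complexity terms comes from is correct.

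The privacy half, however, contains a genuine error: you claim that on safe inputs the restricted release step is \emph{pure} $\eps$-DP, with safety ``forcing $Z(x')/Z(x)\le e^{\eps/2}$ and restoring the full $e^{\eps}$ bound.'' This cannot work. For adjacent $x,x'$ the supports $\{y:T_x(y)\ge t\}$ and $\{y:T_{x'}(y)\ge t\}$ generically differ by a sliver of positive Lebesgue measure (changing one sample shifts boundary depths by one), and on that sliver the likelihood ratio is infinite; no Hamming-distance condition on $x$ can make this sliver empty for \emph{all} neighbors. What safety actually buys in BGSUZ is that this sliver carries probability mass at most $O(\delta)$ under $\cM_{\eps,t}(x)$ --- which is precisely why the safety score in Algorithm~\ref{alg:discrete_tukey_vol} tests a volume-ratio condition against the threshold $\delta_e/(4e^{\eps_e})$, and why the mechanism conditioned on safety is only $(\eps/2,\delta')$-indistinguishable from its neighbors rather than pure DP. The paper states this explicitly: REM ``requires approximate DP, unlike BoxEM.'' Your composition at the end therefore needs repair: the final guarantee is PTR (contributing one $(\eps,\delta)$ factor) composed with an \emph{approximate}-DP conditional release (contributing a second $\delta$ term), together with a case analysis on whether $\cM(x)\approx\cM(x')$ at all --- if not, both runs are at distance zero from unsafety and both fail with nearly equal probability. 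As written, your argument proves a statement (pure DP after a passed test) that is false for this mechanism.
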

The dependence on $\delta$ in the sample complexity is decoupled from $d$, which implies that we can ask for $\delta$ to be very small, on the order of $e^{-d}$. 
Furthermore, the constant $C$ is not too large, overall making it an algorithm whose accuracy is expected to be good in practice.

Finally, the algorithm is robust to data corruptions in the strong contamination model (Definition~\ref{def:strong-contamination}).
Theorem~\ref{thm:BGSUZ} still holds (up to a change in constant) when the data comes from an $\alpha$-corruption of data from a Gaussian.

For our experiments, we examine the standard Gaussian Mechanism, the practice-oriented CoinPress~\cite{biswas2020coinpress}, and a number of Tukey-based mechanisms: the Tukey Depth Mechanism over the hypercube of~\cite{liu2021robust} (BoxEM) and the Restricted Tukey Depth Mechanism of BGSUZ (REM) across different notions of depth, including the axis-aligned halfspaces used in~\cite{amin2022easy}.

\begin{figure}[h!]
    \centering
    \begin{subfigure}[t]{0.48\columnwidth}
        \centering
        \includegraphics[width=\linewidth]{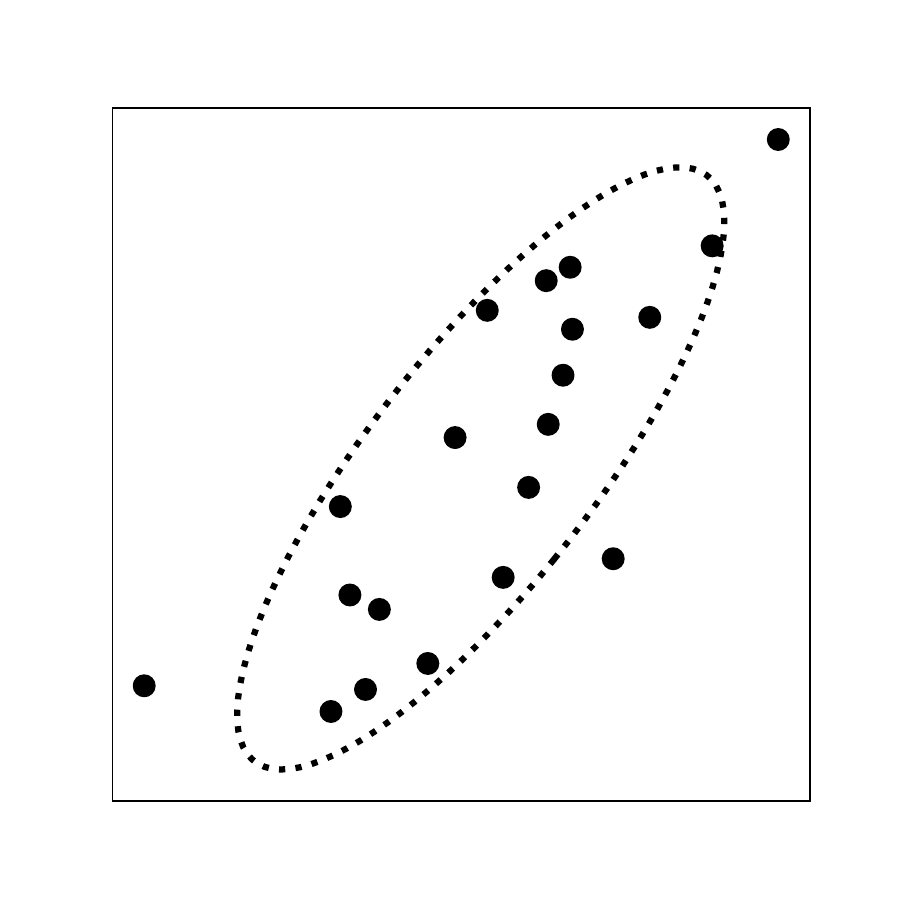} %
        \caption{}
        \label{fig:top-left}
    \end{subfigure}
    \hfill
    \begin{subfigure}[t]{0.48\columnwidth}
        \centering
        \includegraphics[width=\linewidth]{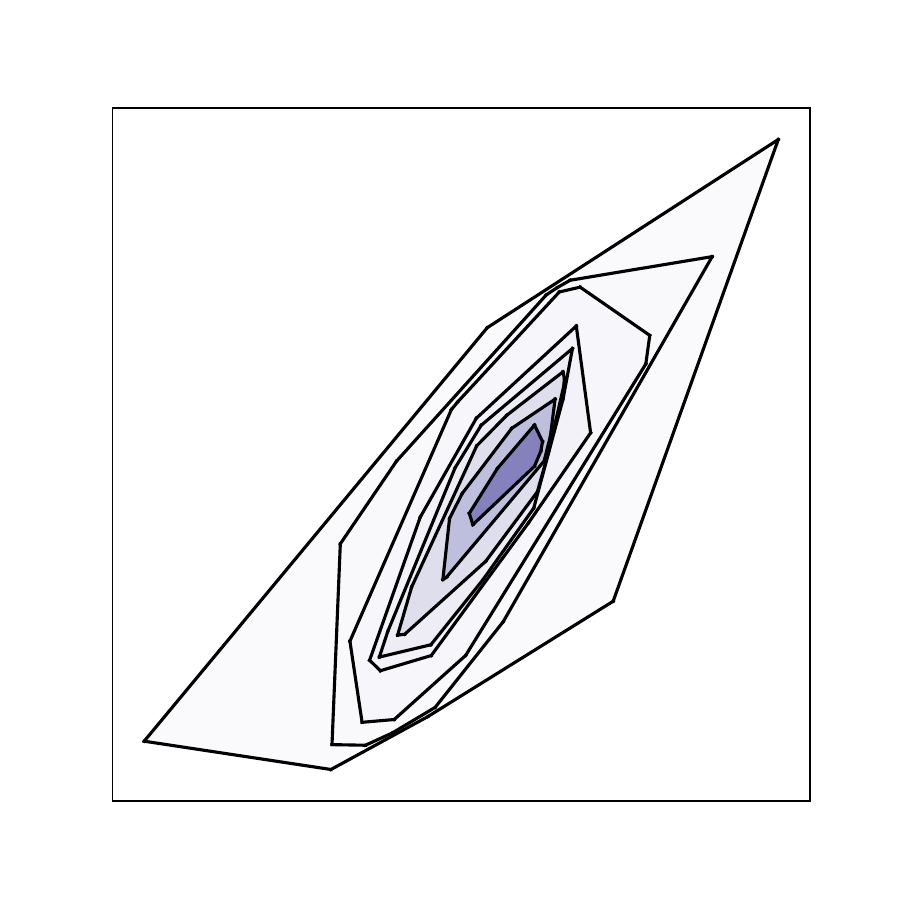} %
        \caption{}
        \label{fig:top-right}
    \end{subfigure}

    \vspace{1em} %
    \begin{subfigure}[t]{0.8\columnwidth}
        \centering
        \includegraphics[width=\linewidth]{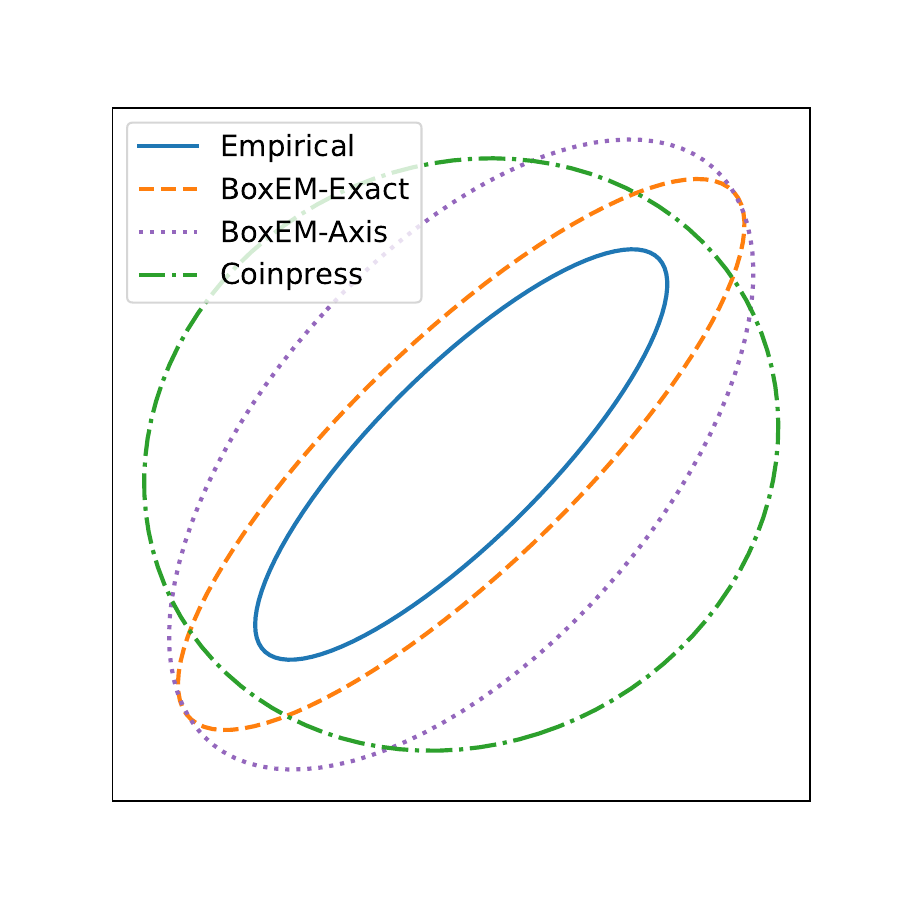} %
        \caption{}
        \label{fig:bottom}
    \end{subfigure}

    \caption{When data arise from a nonspherical distribution (a), the Tukey level sets (b) reflect this. In (c), we show the covariance of output distributions of different mechanisms: the empirical mean and the output of BoxEM-Exact have similar shapes. CoinPress reveals its use of spherical Gaussian noise. 
    (Here we use $n=200$;  CoinPress has relatively high error. To emphasize shape, we scaled the CoinPress covariance  down by a factor of ten.)
    Interestingly, BoxEM with axis-aligned depth seems to sit between the two methods.}
    \label{fig:main-figure}
\end{figure}

\section{Our Implementations}
\label{sec:implementations}
\newcommand{\score}{\tilde{h}}
\newcommand{\floor}[1]{\left\lfloor #1 \right\rfloor}

We implement REM and BoxEM with both exact Tukey depth and random Tukey depth as a score function. In this section, we discuss the details of our implementations, which combine several theoretical and practical tools from the literature on differential privacy, robust statistics, and beyond. 

REM follows the high-level structure outlined in Algorithm~\ref{alg:discrete_tukey_vol_overview}, while BoxEM bypasses the PTR check and samples directly from the exponential mechanism over the box $[-R,R]^d$. 
In Subsection~\ref{sec:high-level-dets}, we start by discussing high-level implementation tools which allow us to instantiate the outline provided Algorithm~\ref{alg:discrete_tukey_vol_overview} in the form of Algorithm~\ref{alg:discrete_tukey_vol}, which is well-specified.
Running Algorithm~\ref{alg:discrete_tukey_vol} on a computer requires constructing Tukey upper-level sets, computing their volumes, and sampling from them uniformly. 
We further discuss the details of our implementation of these steps in Subsection~\ref{sec:low-level-dets}.\footnote{Code will be released upon publication.}

\begin{figure}[!t]
\removelatexerror
\begin{algorithm}[H] \caption{REM with Tukey depth%
}\label{alg:discrete_tukey_vol}
\begin{algorithmic}[1]
\Require{Dataset $x = (x_1,\dots,x_{n})^T \in \mathbb{R}^{n\times d}$.
Privacy parameters: $\eps,\delta>0$. Minimum threshold $t$.}

\State Adjust parameters $\eps_p \gets \frac{\eps}{4}$, $\eps_e \gets \frac{\eps}{2}$, $\delta_p\gets \delta$, $\delta_e\gets \frac{\delta}{e^{2\eps_p}}$. \Comment{Split $\eps$ equally among PTR, $\cM$.}

\Algphase{Volume Computations}
\For{$\ell = \{ 1, \ldots, \floor{n/2}\}$} \label{line:volumes-in}
    \State $V_{\geq \ell} \gets \mathrm{Vol}(\cY_{\geq\ell})$ for $\cY_{\geq\ell}=\{y\in\R^d: T_x(y)\geq \ell\}$. 
\EndFor \label{line:volumes-out}

\Algphase{PTR check}
\State $\mathcal{K} \gets \left\{k \in[0,t): \exists g>0 \text{ s.t. } \frac{V_{\geq t-k-1}}{V_{\geq t+k+g+1}} \cdot e^{-\frac{g\eps_e}{2}} \le \frac{\delta_e}{4e^{\eps_e}} \right\}$
\State $\score(x) \gets \max \left\{ \mathcal{K}, -1\right\}$ 
\If{$\score(x) + \mathrm{Lap}(\frac{1}{\eps_p})< \frac{\log(1/2\delta_p)}{\eps_p}$} 
    \Return \texttt{FAIL}. \label{line:safety-score-check}
\EndIf

\Algphase{Sampling from REM: $\cM$}
\State Draw level $L\in \{t,\ldots,\floor{\frac{n}{2}}\}$ with probability $p_\ell$ as in~\eqref{eq:prob}.\label{line:draw-level}
\State \Return $\tilde\mu \sim \mathrm{Uniform}(\mc{Y}_{\geq L})$.\label{line:uniform-sampling}

\end{algorithmic}
\end{algorithm}
\end{figure}

\subsection{High-Level Implementation Details}\label{sec:high-level-dets}
The key steps in Algorithm~\ref{alg:discrete_tukey_vol_overview} are: 1) sampling from the exponential mechanism with Tukey depth as a score function, and 2) conducting the PTR check. 
\subsubsection{Sampling from the EM with Tukey depth} 
To sample from the (restricted) exponential mechanism with Tukey depth as a score function, we follow the natural approach in~\cite{kaplan2020find}. 
We now sketch the main ideas.

Recall that we aim to sample a point $y$ with probability proportional to $e^{\eps T_x(y)/2}$, where $T_x(y)$ is the Tukey depth of point $y$ with respect to dataset $x$. 
Equivalently, we might pick a random depth $L=\ell$ with probability proportional to $e^{\eps \ell/2}\cdot \mathrm{Vol}(\mc{Y}_{=\ell})$ and then sample uniformly from the set $\mc{Y}_{=L}=\{y\in\R^d: T_x(y)=L\}$. 
With this approach, the probability that we output $W$ that belongs in a Tukey level $\ell\geq t$ is 
\begin{equation}\label{eq:original_prob}
    \Pr[W \in \mc{Y}_{=\ell}] = C \cdot e^{\eps \ell/2}\cdot \mathrm{Vol}(\mc{Y}_{=\ell}),
\end{equation}
for a normalizing constant $C$. 

However, sampling from $\mc{Y}_{=\ell}$ is impractical, since these sets are not convex. 
Tukey upper-level sets $\mc{Y}_{\geq \ell}$, on the other hand, are convex. 
Therefore, we randomly select a depth $L=\ell$ according to a certain probability distribution $\{p_\ell\}$ and then sample uniformly from the convex set $\mc{Y}_{\ge L}$. 

We now show how to construct $\{p_{\ell}\}$ in order to maintain~\eqref{eq:original_prob}.
We write the probability of picking a point at depth exactly $\ell$ as a sum over the depths from $j=t$ to $\ell$, since the upper-level sets at lower depths contain $\mc{Y}_{=\ell}$.
We have
\begin{align*}
    \Pr[W\in \mc{Y}_{=\ell}] & = \sum_{j=t}^\ell \Pr[W\in \mc{Y}_{=\ell} \mid L=j] \Pr[L=j] \\
    &= \sum_{j=t}^m \frac{\mathrm{Vol}(\mc{Y}_{=\ell})}{\mathrm{Vol}(\mc{Y}_{\ge j})} \cdot p_j.
\end{align*}
 Plugging in~\eqref{eq:original_prob} we have
\begin{align*}
\mathrm{Vol}(\mc{Y}_{=\ell})\sum_{j=t}^\ell \frac{p_j}{ \mathrm{Vol}(\mc{Y}_{\ge j})} = C\cdot  \mathrm{Vol}(\mc{Y}_{=\ell})\cdot e^{\eps \ell/2}.
\end{align*}
The volume of $\mc{Y}_{=\ell}$ cancels on both sides. 
Let $x_\ell \defeq p_\ell/\mathrm{Vol}(\mc{Y}_{\ge \ell})$ and $C'=Ce^{\eps t/2}$. 
By definition, we have:
\begin{align*}
    x_t &= C e^{\eps t/2} = C' \\
    x_{t+1} + x_t &= C e^{\eps(t+1)/2} = C' e^{\eps/2}
\end{align*}
which implies $x_{t+1} = C' e^{\eps/2} - C' = C'e^{\eps/2}(1 - e^{-\eps/2})$.
In turn, the latter combined with 
    \begin{align*}
    x_{t+2} + x_{t+1} + x_t = C e^{\eps(t+2)/2}
    \end{align*}
    implies $x_{t+2} = C' e^{2\eps/2}(1 - e^{-\eps/2})$. 
By induction, we retrieve that $x_{\ell} = C e^{\eps \ell/2}(1 - e^{-\eps/2})$, or equivalently, that the probabilities $p_\ell$ should be set as follows
\begin{equation}\label{eq:prob}
    p_\ell = C \cdot \mathrm{Vol}(\mc{Y}_{\ge \ell}) \cdot e^{\eps \ell/2}(1 - e^{-\eps/2}),   
\end{equation}
where $C=\left((1-e^{-\eps/2})\sum_{\ell=t}^{\floor{n/2}} e^{\eps\ell/2} \mathrm{Vol}(\mc{Y}_{\ge \ell})\right)^{-1}$ is the normalizing constant so that $\sum_{\ell=t}^{\floor{n/2}} p_\ell=1$.

Note that this calculation is also useful for BoxEM: the probabilities $p_\ell$ in this case can be retrieved by setting $t=0$ and $p_0 = C \cdot \mathrm{Vol}(\mc{Y}_{\ge 0}) = C (2R)^d$, with the normalizing constant adjusted accordingly.%

Therefore, we conclude that to sample from the REM with Tukey depth, it suffices to compute the volumes of all Tukey upper-level sets $\cY_{\geq m}$, $m\in [t,\floor{n/2}]$ (lines~\ref{line:volumes-in}-\ref{line:volumes-out} in Algorithm~\ref{alg:discrete_tukey_vol}), draw a level $L=\ell$ with probability $p_\ell$ as in~\eqref{eq:prob} (line~\ref{line:draw-level}), and sample uniformly from the convex Tukey region $\cY_{\geq \ell}$ (line~\ref{line:uniform-sampling}). 

\textit{Racing Sampling for Numerical Stability. }
We use racing sampling to implement sampling a depth $L=\ell$ (line~\ref{line:draw-level}), as in \cite{amin2022easy}. 
We want to select depth $L=\ell$ with probability $p_\ell$, proportional to $\mathrm{Vol}(\mc{Y}_{\ge \ell})\cdot e^{\eps\ell/2} \cdot (1-e^{-\eps/2})$ (as in~\eqref{eq:prob}). 
To do this, racing sampling requires drawing independent samples $U_\ell\sim \mathrm{Uniform}[0,1]$ for each $\ell\in[t, \floor{n/2}]$, and computing 
\begin{align}
    Z_\ell = \log\log \frac{1}{U_\ell} - \log \mathrm{Vol}(\mc{Y}_{\ge \ell}) - \frac{\eps \ell}{2} - \log (1-e^{-\frac{\eps}{2}}).
\end{align}
Then $L = \argmin Z_\ell$ will be correctly distributed. See \cite{medina2020duff} for a reference and proof of correctness of racing sampling; the algorithm is attributed to Ilya Mironov. 

\subsubsection{PTR check with Approximate Distance-to-Unsafety}
A key obstacle to implementing REM in Algorithm~\ref{alg:discrete_tukey_vol_overview} is abstract the ``distance to unsafety'' calculation. 
At first glance, it is not clear how one can translate this subroutine to a finite-step algorithm, beyond a na\"ive brute-force search over the space of all datasets. 

Amin, Joseph, Ribero, and Vassilvitskii\cite{amin2022easy} cleverly avoid this search, replacing the exact distance with a carefully constructed approximate distance. 
This approximation is low sensitivity, so it plugs into  the same PTR framework. 
It is a lower bound on the exact distance to unsafety, which implies that (when the private check passes) the restricted exponential mechanism is still private with a rescaling of its parameters. (We discuss the privacy guarantees of our algorithms at the end of this section.) 
Finally, it is easier to compute: it requires only computing the volumes of Tukey upper-level sets on the input dataset. We already need these quantities to execute the exponential mechanism. 
The particular notion of approximate distance comes from a quantity used by BGSUZ in their analysis, while Amin, Joseph, Ribero, and Vassilvitskii~\cite{amin2022easy} show how it can be used algorithmically. 
Subsequent work~\cite{dick2024better} gave an improved distance approximation, which we have not included. 
In our implementation, we use the approximate distance as in~\cite{amin2022easy} to execute the PTR check (line~\ref{line:safety-score-check}). If we already have the volume estimates in hand, the check only takes polynomial time in $n$. 

\subsection{Exact Volume Computation and Sampling}\label{sec:low-level-dets}
With the above tools, we have turned the abstract Algorithm~\ref{alg:discrete_tukey_vol_overview} into Algorithm~\ref{alg:discrete_tukey_vol}, which is much closer to implementation.
Two parts of the computation remain unspecified.
First, how do we compute the volume of the Tukey upper-level sets $\{\mc{Y}_{\le \ell}\}_\ell$?
Second, how do we sample from the uniform distribution over a given $\mc{Y}_{\ge \ell}$?

\subsubsection{Exact Tukey depth} 
Our first implementations of REM and BoxEM use exact Tukey depth. 
In general, the volumes of Tukey upper-level sets can be computed in exponential time via \emph{triangulation}: enumerate the polytope's vertices (i.e., turn the \emph{$H$ representation} into the \emph{$V$ representation}) and partition the body into simplices. 
The volume of each simplex can be efficiently computed and their sum will give us the volume of the polytope, which corresponds to the Tukey upper-level set. 
In our implementation, we rely on the R package \texttt{TukeyRegion} \cite{liu2019fast, fojtik2023exact} and the Python package \texttt{data-depth} to compute the exact Tukey regions and their volumes. 

We sample uniformly from the chosen level set via rejection sampling, drawing proposals uniformly from a bounding box. 
In our experiments, this was not a significant contributor to running time.

\subsubsection{Approximate Tukey depths}
A Tukey region takes $O(n^d)$ to construct, while an exact computation of its volume takes $n^{O(d^2)}$ time. Outside the privacy literature, there is recent work on computationally efficient approximations to the Tukey depth \cite{liu2019fast,fojtik2023exact}. 
Some efficient algorithms compute \emph{exact} depth via heuristics; such algorithms do not suit our purposes, since the manner in which they fail may depend on the data and break privacy.
However, we can use algorithms which \emph{exactly} calculate an approximate Tukey depth that retains its low sensitivity. 

One such approximation is the \emph{axis-aligned} Tukey depth  considered in Amin, Joseph, Ribero, and Vassilvitskii~\cite{amin2022easy}, which replaces the minimum over all halfspaces with a minimum over the $d$ directions of the canonical basis. 
Here, every Tukey region is a high dimensional rectangle, whose volumes are trivial to compute. 
The accuracy guarantee however would be with respect to a hyperrectangle. This provides weaker guarantees for $\ell_2$ error, as dimensionality increases, compared to Tukey depth. Moreover, its accuracy guarantees are no longer affine-invariant. 

Another notion of approximate Tukey depth, which we find more suitable for our purposes, is the \emph{random} Tukey depth \cite{cuesta2008random}, which replaces the minimum over all halfspaces with one over a randomly chosen list of $\directions$ halfspaces. If $\directions\rightarrow n^d$, random Tukey depth would behave as exact Tukey depth. However, $\directions=O(d)$ seem enough directions in practice to maintain good accuracy, while achieving better running time.\footnote{\cite{briend2023qualityrandomizedapproximationstukeys} provide an explanation for this choice: they show that $k=O(d)$ directions suffice, in some parameter regimes, for the random Tukey depth of a point to be close to its exact Tukey depth, for points of high depth. Since our algorithms usually output a point of high depth, $k=O(d)$ is likely enough to yield good practical accuracy.
}

Our second implementation of REM and BoxEM substitutes random Tukey depth for exact. 
We begin the algorithm by choosing a number $\directions$ of halfspaces uniformly at random. Crucially, they are selected independently of the data. 
Each halfspace $j\in [k]$ is determined by a vector $a_j\in \R^d$. 
This means the $\ell$-th level set $\cY_{\ge \ell}$, is now a polytope $P = \{x : Ax\le b_i\}$, where the $j$-th row of $A$ is the vector $a_j$. Thus, the construction of all upper-level sets takes only $nkd$ time. 

There are no known deterministic algorithms for computing polytope volumes whose running time scales only polynomially with the dimension. 
The fastest-known algorithms for exactly computing the volume of a $d$-dimensional polytope with $k$ constraints take $k^d$ time. 
However, a number of practical implementations scale to several dimensions. 
The \texttt{polytope} package in Python provides a number of useful tools for manipulating polytopes.
We use the \texttt{VINCI} software for polytope volumes~\cite{bueler2000exact} and its implementation of Lasserre's algorithm \cite{lasserre1983analytical}.
In low dimensions, we also use the Quickhull algorithm \cite{barber1996quickhull}, available in Python through \texttt{scipy.spatial} \cite{2020SciPy-NMeth}. See~\cite{emiris2018practical} for further comparisons and discussion.

\subsection{Approximate Volume Computation and Sampling} 
A natural next step to further improve the running time of our implementations is to use approximation algorithms for polytope volume and sampling. There are core challenges to making these approaches private, which we address, along with theoretical advances in polytope volume computation in~\Cref{sec:approximate_volume}. 
However, we implement a version of this approach to illustrate its potential. For this purpose, we use the R package \texttt{Volesti} \cite{volesti}, a polished and efficient implementation of several techniques.

\subsection{Privacy Guarantees} 
The privacy analysis of REM with exact volume computations and sampling is almost completely contained in BGSUZ (see Proposition D.1 and Appendix C.2.1). It follows from an analysis of PTR combined with the exponential mechanism, with the exception that the distance-to-unsafety has now been replaced by its approximate lower bound, $\score$. The latter is $2$-sensitive instead of $1$-sensitive, so the total privacy guarantee is $(2\eps_p+\eps_e, \max\{e^{2\eps_p}\delta_e, \delta_p\})$. The parameters of Algorithm~\ref{alg:discrete_tukey_vol} are appropriately scaled to guarantee $(\eps,\delta)$-DP. 

The privacy guarantees of BoxEM with exact volume computations and sampling follow directly from the guarantees of the standard exponential mechanism.

\section{Experiments and Results}
Our work makes Tukey depth mechanisms a viable practical approach for private mean estimation of data low dimensions. With our new implementations at hand, we investigate the practical performance of BoxEM and REM in different scenarios, in terms of their error and use of computational resources. We demonstrate their superior accuracy in these settings compared to commonly used, Gaussian mechanism based approaches, consistent with findings from theoretical literature.

\textit{Experimental setup.} All experiments are performed on synthetic data, generated by selecting a point $\mu$ uniformly at random from the sphere of radius $3$ and then drawing points independently from $\mc{N}(0,\id)$. 
Privacy parameters are set to be $\eps=1$ and $\delta=10^{-6}$ throughout (or simply $\eps=1$ for pure DP mechanisms). 
The range-bounding hyperparameter required by all private mechanisms (except REM) is set to $R=10$, unless noted otherwise. 
The average error is computed over 10 trials, unless noted otherwise. Error bars represent 95\% confidence intervals.\footnote{Formally, on trial $i\in[10]$ we compute error $e^{(i)}=\norm{\mu^{(i)}-\tilde\mu^{(i)}}{2}$ and we plot $\hat{e}\pm 1.96 \frac{\hat\sigma}{\sqrt{100}}$, where $\hat\sigma^2 = \frac{1}{100}\sum_i (e^{(i)}-\hat e)^2$.} 
The \texttt{empirical} curve always denotes the baseline (non-private) empirical error $\norm{\hat\mu-\mu}{2}$, where $\hat\mu$ is the sample mean $\frac 1 n \sum_i x_i$. 
To isolate the error introduced by privacy, for the private mechanisms we report the distance $\norm{\tilde\mu - \hat\mu}{2}$. 
Furthermore, we omit average errors that are larger than 3, as the algorithm that returns ``0'' achieves this error trivially (since $\|\mu\|_2\leq 3$). 
Finally, since REM has a probability of failure that depends on the particular dataset, we do not report the results if REM fails for at least one trial.\footnote{Given the low number of trials, a single failure event likely represents an infeasible failure rate overall.} 
All experiments were conducted on a 2019 MacBook Pro with a 1.4 GHz Quad-Core Intel Core i5. 

\subsection{Running Time}
\begin{table}[t]
\caption{Running time (in seconds) of Tukey depth mechanisms across samples size and dimension. ``-" means the computation took longer than ten minutes. Random depth used $k=30$ directions.}
\begin{center}
\begin{tabular}{c|cccc|ccc|}
\cline{2-8}
\multicolumn{1}{l|}{} & \multicolumn{4}{c|}{Random Depth} & \multicolumn{3}{c|}{Exact Depth} \\ \hline
\multicolumn{1}{|c|}{$n$} 
    & \multicolumn{1}{l|}{$d=2$} & \multicolumn{1}{l|}{$d=3$} & \multicolumn{1}{l|}{$d=4$} & \multicolumn{1}{l|}{$d=5$} 
    & \multicolumn{1}{l|}{$d=2$} & \multicolumn{1}{l|}{$d=3$} & \multicolumn{1}{l|}{$d=4$} 
        \\ \hline\hline
\multicolumn{1}{|c|}{50}   
    & \multicolumn{1}{c|}{0.6}   & \multicolumn{1}{c|}{0.7}   & \multicolumn{1}{c|}{2.0}  & 24.1
    & \multicolumn{1}{c|}{1.0}   & \multicolumn{1}{c|}{0.9}  & 34.6
        \\ \hline
\multicolumn{1}{|c|}{100}  
    &\multicolumn{1}{c|}{1.1}   & \multicolumn{1}{c|}{1.1}   & \multicolumn{1}{c|}{4.0}  & 50.0
    & \multicolumn{1}{c|}{0.4}   & \multicolumn{1}{c|}{14.7}  & -                          
    \\ \hline
\multicolumn{1}{|c|}{500}  
    & \multicolumn{1}{c|}{4.9}   & \multicolumn{1}{c|}{5.2}   &   \multicolumn{1}{c|}{19.8}  & 252.1                    
    & \multicolumn{1}{c|}{6.6}  & \multicolumn{1}{c|}{-}     & -                          
        \\ \hline
\multicolumn{1}{|c|}{1000} 
    & \multicolumn{1}{c|}{10.0}   & \multicolumn{1}{c|}{9.4}  &  \multicolumn{1}{c|}{38.1}  & -                     
    & \multicolumn{1}{c|}{46.0}   & \multicolumn{1}{c|}{-}     & -                          
        \\ \hline
\multicolumn{1}{|c|}{2000} 
    & \multicolumn{1}{c|}{21.9}  & \multicolumn{1}{c|}{19.4}  &  \multicolumn{1}{c|}{75.2}  & -                       
    & \multicolumn{1}{c|}{376.2}     & \multicolumn{1}{c|}{-}     & -                          
        \\ \hline
\end{tabular}
\label{tab:running_times}
\end{center}
\end{table}

First, we establish that our new implementations indeed make Tukey depth mechanisms viable alternatives in low or modest dimensions. 
Table~\ref{tab:running_times} lists running times for BoxEM with exact or random Tukey depth, for up to $n=2000$ samples and $d=5$ dimensions. 
These times are dominated by volume computation and thus representative for both REM and BoxEM. 
As expected, using random Tukey depth with $k=30$ directions  as a score function provides a significant speed-up over exact Tukey depth in all regimes.\footnote{\cite{cuesta2008random} determine that, for all experiment regimes we consider, the convergence of random Tukey depth to the Mahalanobis norm starts becoming negligible after $k=29$ directions.}
This is aligned with theoretical results, as computing the volume of $n$ random Tukey regions takes asymptotically $nk^d$ time, compared to $n^{O(d^2)}$ required for 
exact Tukey regions. 
Overall, Tukey depth mechanisms seem well-suited for deployment in offline scenarios where accuracy comes at a premium and the sample size and dimension are both relatively modest.

\subsection{Better Accuracy in Practice}
Next, we explore the accuracy of Tukey depth mechanisms in practice. We demonstrate that they indeed outperform other commonly used approaches, reinforcing our main claim that pursuing more practical implementations is a valuable endeavor.

In the following experiments, we use our implementations of REM and BoxEM with either exact or random Tukey depth. We compare to two baselines: the folklore Gaussian Mechanism and the practice-oriented CoinPress (code by~\cite{biswas2020coinpress}). The simplest of these, the Gaussian Mechanism, clips the data to an $\ell_2$ ball of radius $R$, and releases the resulting empirical mean with appropriately scaled spherical Gaussian noise. %
CoinPress~\cite{kamath_KLSU19, biswas2020coinpress} iteratively refines an estimate of the mean, which results in its error depending only logarithmically on $R$.

\begin{figure}[t]
\centerline{
\includegraphics[width=0.8\columnwidth]{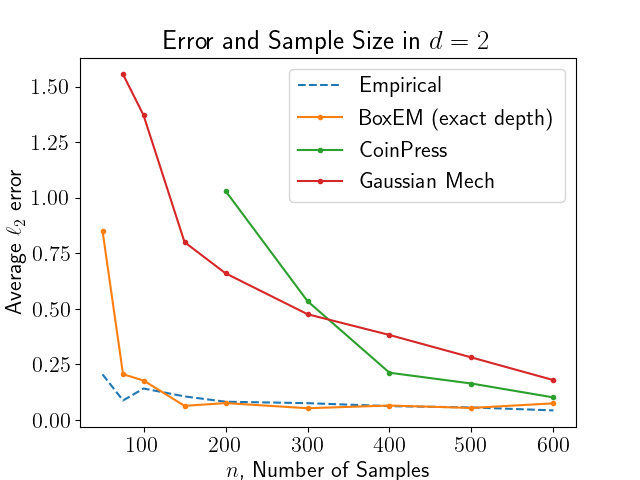}}
\caption{Mechanisms' $\ell_2$ error as a function of sample size.
\texttt{Empirical} represents error due to sampling. Other lines quantify the ``cost of privacy,'' i.e., the difference between the empirical mean and the private estimate. 
The Tukey mechanism introduces error comparable to the empirical error at small samples sizes.}
\label{fig:error_sample_size}
\end{figure}

Fig.~\ref{fig:error_sample_size} shows that the Tukey depth mechanism BoxEM performs much better than the Gaussian noise based mechanisms in this setting. 
In particular, BoxEM introduces error that is comparable to the empirical one, achieving ``privacy for free'' in practice, even at very small sample sizes. Additionally, it does so while satisfying a stronger pure-DP guarantee. 
We note that although in theory the error of both BoxEM and CoinPress is asymptotically the same (and dependent logarithmically on $R$), BoxEM consistently outperforms CoinPress in this setting. This may partially be due to the larger constants involved in Gaussian noise based approaches, compared to the ones based on Tukey depth.

\begin{figure}[t]
\centerline{
\includegraphics[width=0.8\columnwidth]{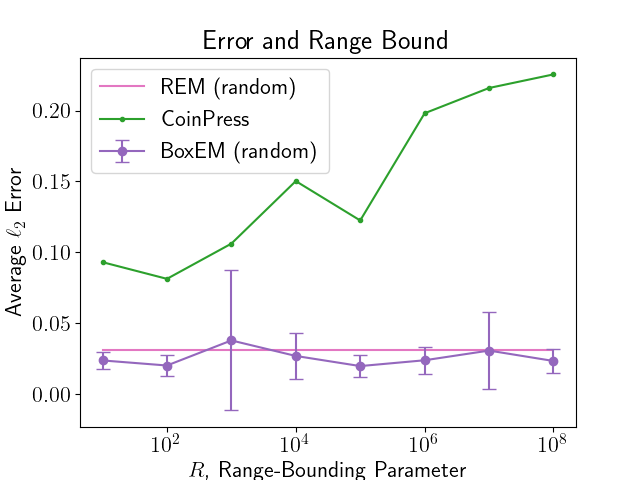}}
\caption{
Mechanisms' $\ell_2$ error as a function of $R$, the range-bounding hyperparameter. 
Tukey depth mechanisms (BoxEM, REM) exhibit essentially no dependence on $R$. %
Note the log-log scale. This experiment uses $n=1000$, $d=2$. Random depth used $k=30$ directions.}
\label{fig:error_range_parameter}
\end{figure}

Fig.~\ref{fig:error_range_parameter} further investigates the effect of the range-bounding parameter $R$ on the error. 
The range-bounding parameter represents prior knowledge about the data; dependence on such a quantity is necessary for algorithms satisfying pure or concentrated differential privacy. However, this quantity can be large: a mild dependence or no dependence is desirable.
In this experiment, we examine how $R$ affects the error of CoinPress, as well as BoxEM and REM with random Tukey depth. 
We do not compare to the Gaussian mechanism since its error scales linearly with $R$ and is clearly only applicable for small values of the hyperparameter.

Fig.~\ref{fig:error_range_parameter} confirms that CoinPress's error suffers form a $\mathrm{log}(R)$ dependence, which is expected as this dependence is not merely a result of the analysis but appears directly in the magnitude of the added noise. 
REM's error (provably) has no dependence on $R$. 
However, BoxEM appears to have \emph{no} dependence on $R$ as well, although its theoretical analysis includes a logarithmic dependence. 
Recall that in BoxEM,
$R$ controls the size of the bounding box, and thus increasing $R$ places more of the exponential mechanism's mass outside the convex hull of the data (i.e., points with Tukey depth zero). 
When the mechanism samples from this space, we expect high error, roughly $\Omega(\sqrt{d}R)$.
However, in the regime of these experiments, with high probability the mechanism samples from a point with non-zero depth.\footnote{Consider $d=2$, $n=1000$, and $\eps=1$, as in this experiment. For $R=10^{10}$, we draw from outside the convex hull with probability proportional to $e^{\eps \cdot 0} \cdot (2R)^2 \approx 10^{26}$.  Empirically, the set of depth $\ge n/4$ has volume around 1, so it receives weight proportional to $e^{\eps n/4}\cdot 1\approx 10^{108}$, dominating the space outside the convex hull.} 
Conditioned on this event, the output of the mechanism is, in fact, independent of $R$. 
Overall, we empirically observe that Tukey depth mechanisms (both REM and BoxEM) are not affected by prior knowledge on the distribution and present more suitable alternatives, particularly when known bounds are loose. 

\subsection{Comparison between Different Notions of Depth}

\begin{figure}[t]
\centerline{
\includegraphics[width=0.8\columnwidth]{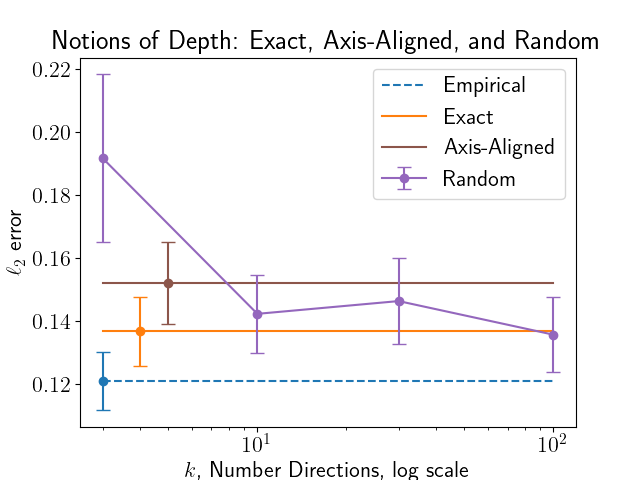}}
\caption{BoxEM's $\ell_2$ error under different notions of depth. 
As more directions $k$ are used, the random depth mechanism soon performs better than axis-aligned depth, and approaches error close to that of exact depth.
This experiment uses $n=200$, $d=2$, and 200 trials. \texttt{Empirical}, \texttt{Exact}, and \texttt{Axis-Aligned} lines each represent a single quantity (which does not depend on $k$). Note the $\log$-scale in $k$.}
\label{fig:error_num_directions}
\end{figure}

The family of Tukey depth mechanisms has good performance in theory and in practice. However, the choice of the depth function plays an important role in the computation-accuracy trade-off these algorithms exhibit: using exact Tukey depth offers tight and affine-invariant accuracy guarantees but slow running time, whereas axis-aligned Tukey depth is very fast to compute but only guarantees a hyperrectangle-shaped confidence interval, suffering larger error in high dimensions. 

Fig.~\ref{fig:error_num_directions} illustrates how various notions of depth affect the error.
The axis-aligned notion of depth used and implemented by \cite{amin2022easy} has higher error than the (much slower) exact-depth algorithm.
The algorithm which computes depth with respect to random directions presents a middle-ground alternative, which allows us to trade-off computation for lower error: as the number of directions increases, the error soon falls under that of axis-aligned depth, and it approaches that of exact depth. We expect the difference between axis-aligned and random Tukey depth to be more stark in higher dimensions. However, making this comparison in high dimensions would be prohibitively slow with our current implementations. 
These results empirically had high variability, so they were averaged over 200 trials. 

\editblock
\subsection{Robustness}
One of the main advantages of Tukey depth mechanisms is that they are robust to data corruptions in the strong contamination model (Definition~\ref{def:strong-contamination}). See Section 3.1 in~\cite{brown2021covariance} for a formal statement of the robustness guarantee of REM and Sections I, J in~\cite{liu2021robust} for BoxEM.
In these experiments, we generate data from a mixture of two Gaussians: one, the ``clean'' data, is $\mathcal{N}(0,\id)$, while the ``corrupted'' data sits at $\mc{N}(s\vec{1}, \frac{1}{10}\id)$, where $s\in \mathbb{R}$ controls the scale of the corruption.  This experiment uses $n=500$, $d=2$, and report averages across 10 trials. 
We also vary the mixture weights between the two components to represent how much of the distribution is corrupted.

In Figure~\ref{fig:robust_rate}, we observe that Tukey depth mechanisms are much less sensitive to the fraction of corruptions compared to CoinPress, as expected. All three private methods appear insensitive to the location of the corrupted data (Figure~\ref{fig:robust_location}). Recall that CoinPress truncates the data in range $R=10$, which offers some level of stability when the location of the corrupted data moves farther from that of the clean data. 

\begin{figure}[t]
\centerline{
\includegraphics[width=0.9\columnwidth]{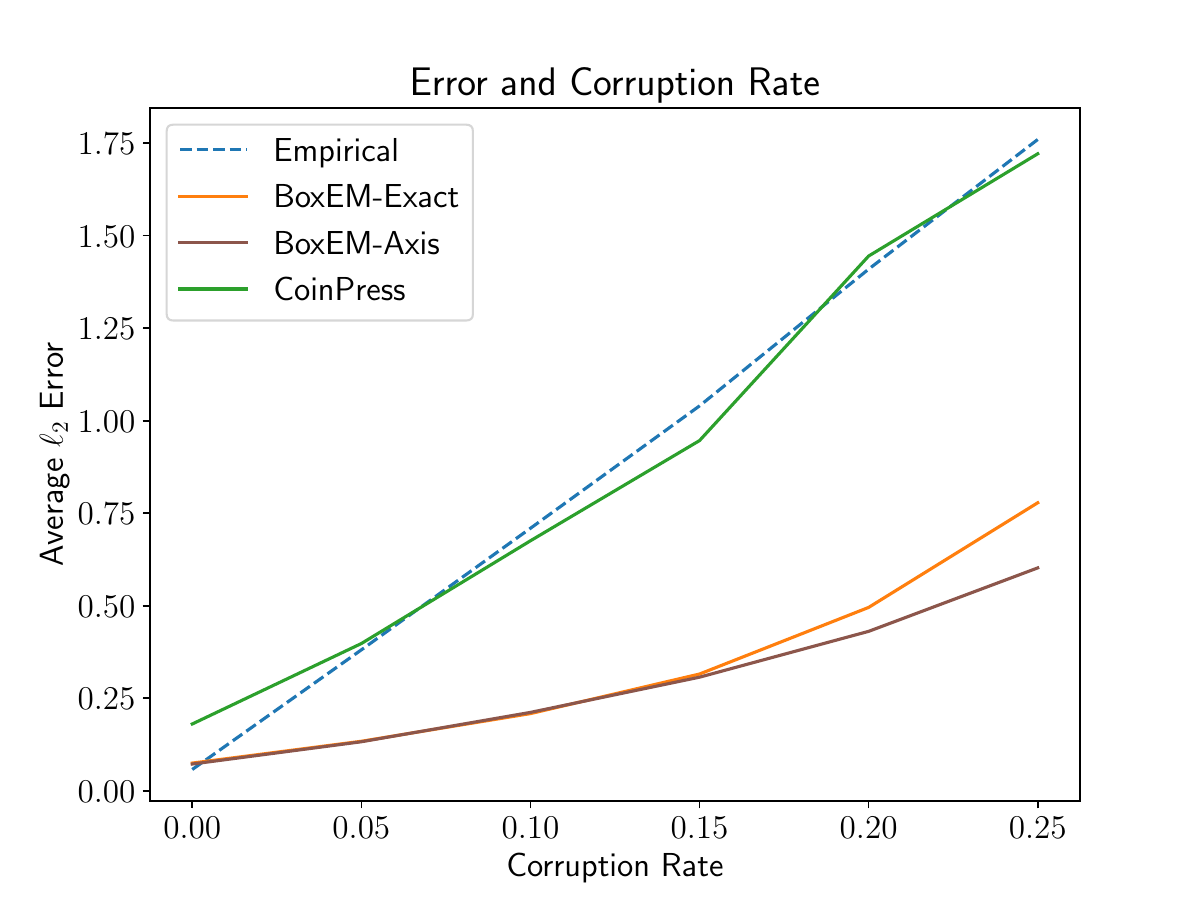}}
\caption{The methods based on the exponential mechanism appear less sensitive to the fraction of outliers.}
\label{fig:robust_rate}
\end{figure}

\begin{figure}[t]
\centerline{
\includegraphics[width=0.9\columnwidth]{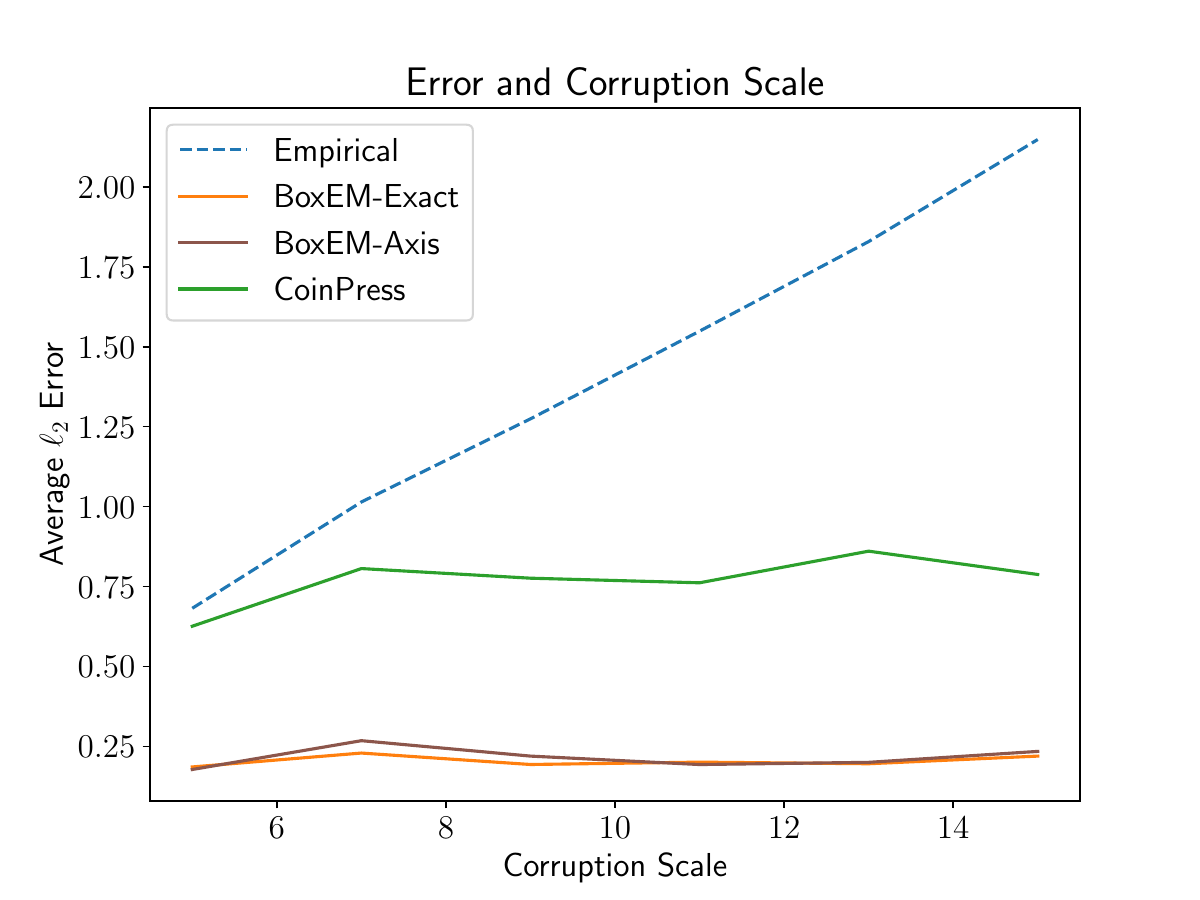}}
\caption{In this experiment, all three methods private appear insensitive to the location of the corrupted data.}
\label{fig:robust_location}
\end{figure}

\editblockdone

\section{To Higher Dimensions Through Sampling}\label{sec:approximate_volume}

Once we have substituted the random depth for exact Tukey depth, our main computational challenges involve sampling from uniform distributions over polytopes and computing their volume.
Above, we used exact solutions (rejection sampling and exact algorithms for the volume) whose running time was exponential in the dimension.
However, there is a long and celebrated line of work providing polynomial-time Markov Chain Monte Carlo algorithms for approximate versions of these tasks.
These algorithms are not purely theoretical: there are many implementations that empirically demonstrate fast mixing times and, subsequently, practical volume approximations.
See \cite{lee2018convergence,gatmiry2023sampling,laddha2020strong} for recent work and discussions.

Formally, we need an efficient algorithm that provides \emph{probably approximately correct (PAC)} volume estimates for polytopes.
\begin{defn}\label{def:volume-oracle-initial}
    Let $A\in \mathbb{R}^{k\times d}$ and $b\in \mathbb{R}^k$ define a polytope $P=\{ x : Ax\le b\}$.
    An \emph{$(\eta,\beta)$-PAC volume oracle} accepts $(A,b)$ and parameters $\eta,\beta\in (0,1)$ and returns a number $\hat{V}$ such that, with probability at least $1-\beta$, 
    \[
        (1-\eta)\mathrm{Vol}(P)\le \hat{V}\le (1+\eta)\mathrm{Vol}(P).
    \]
\end{defn}

The relevant literature provides algorithms that implement this procedure using time polynomial in $d, k, \frac 1 \eta$, and $\log(1/\beta)$.

\subsection{A Call for Provably Practical Volume Estimation}
However, our application uses the PAC guarantees to establish privacy.
Thus, we care not only about the asymptotics but also about constants and logarithmic factors.
To the best of our knowledge, all existing theoretical analyses of polynomial-time polytope volume estimation algorithms incur large constant factors.
For example, methods based on Riemannian Hamiltonian Monte Carlo mix extremely quickly in practice but the theoretical analysis says the underlying Markov chain requires more than $10^{40}\cdot d^{2}$ steps to ensure mixing \cite{kook2023condition}.
Similarly, the hit-and-run technique has multiple practical implementations but seems to require over $10^{10}\cdot  d^3$ steps in order for the theoretical guarantees to apply \cite{lovasz2003hit}.
Even the simplest Markov chains analyzed in this literature, the ball walk and its affine-invariant analog the Dikin walk, have complex proofs that result in  impractical theoretical guarantees for volume estimation \cite{kannan1997random,kannan2009random,laddha2020strong,sachdeva2016mixing}.
It is no surprise that such large constants appear: the analyses are quite  involved and the authors aimed only for tight asymptotic expressions.
However, these constants do not seem to be inherent, as the algorithms appear to mix quite quickly in practice. 
Any theoretical analysis yielding formal guarantees under practical computational limits would immediately yield an improvement in private mean estimation.

\subsection{Approximate Volumes and Privacy}
When we use an algorithm as a PAC volume oracle, the quality of approximation affects our privacy guarantees.
We will now sketch that analysis, the bulk of which appeared in \cite{kaplan2020find}.
We provide necessary additional calculations in Appendix~\ref{app:approximate}. 
Through these claims, we decouple progress on privacy from progress on provably practical volume estimation: any $(\eta,\beta)$-PAC volume oracle with practical running time can be dropped in to immediately yield an improved Tukey depth mechanism.

Informally, to achieve a guarantee of $(\eps,\delta)$-differential privacy  given an $(\eta,\beta)$-PAC guarantee, it suffices to take $\eta$ a bit smaller than $\eps$ and $\beta$ a bit smaller than $\delta$. 
Thus, we can think of the analysis as allocating a constant fraction of the privacy budget to account for the error in the privacy guarantee.
The analysis in \cite{kaplan2020find} shows how to account for this when sampling from the exponential mechanism.
Under the exact exponential mechanism, the probability of selecting a point with Tukey depth $\ell$ is proportional to $V_{=\ell}$, the volume of the points with depth $\ell$.
Under the approximate mechanism, this volume is replaced with our estimate $\hat V_{=\ell}$, which is approximately correct with probability at least $1-\beta$. 
When this holds we have
\[
    \frac{\hat V_{=\ell}}{\sum_{\ell'} \hat V_{=\ell'}}
        \le \frac{(1+\eta) V_{=\ell}}{(1-\eta) \sum_{\ell'} V_{=\ell'}},
\]
a multiplicative increase of $e^{O(\eps)}$ when $\eta\le \eps$ is a small constant. 
The failure probability $\beta$ then folds into the $\delta$ of the privacy guarantee.

The remainder of the privacy analysis, which is new to our work, is to show that the approximate distance-to-unsafety is still low-sensitivity when computed using approximate volumes. 
Since the distance-to-unsafety calculation is based on volume ratios, we can again apply a similar calculation: for any levels $\ell, k$ we have
\[
    \frac{\hat{V}_\ell}{\hat{V}_k}\le \frac{1+\eta}{1-\eta}\cdot \frac{\mathrm{Vol}(\mc{Y}_{\ge\ell})}{\mathrm{Vol}(\mc{Y}_{\ge k})},
\]
a multiplicative increase of at most $e^{O(\eps)}$ whenever $\eta\le \eps$ is a sufficiently small constant.

\subsection{Proof-of-Concept without Privacy Guarantees}
As an initial step in this direction, we conduct an experiment using existing implementations but do not run the Markov chains for the time needed to ensure theoretical guarantees.
Thus, this algorithm is one which might be private (if the Markov chain indeed mixes quickly) but currently is not proved to be so. 

We work with the \texttt{Volesti} package \cite{volesti}, which efficiently implements several random walks and has an established history of development.
Figure~\ref{fig:error_sample_size_d10} shows results in $d=10$ dimensions.
As in Figure~\ref{fig:error_sample_size}, we see that the error due to privacy is comparable to the sampling error when $n \gtrsim 75 d$.

\begin{figure}[t]
\centerline{
\includegraphics[width=0.8\columnwidth]{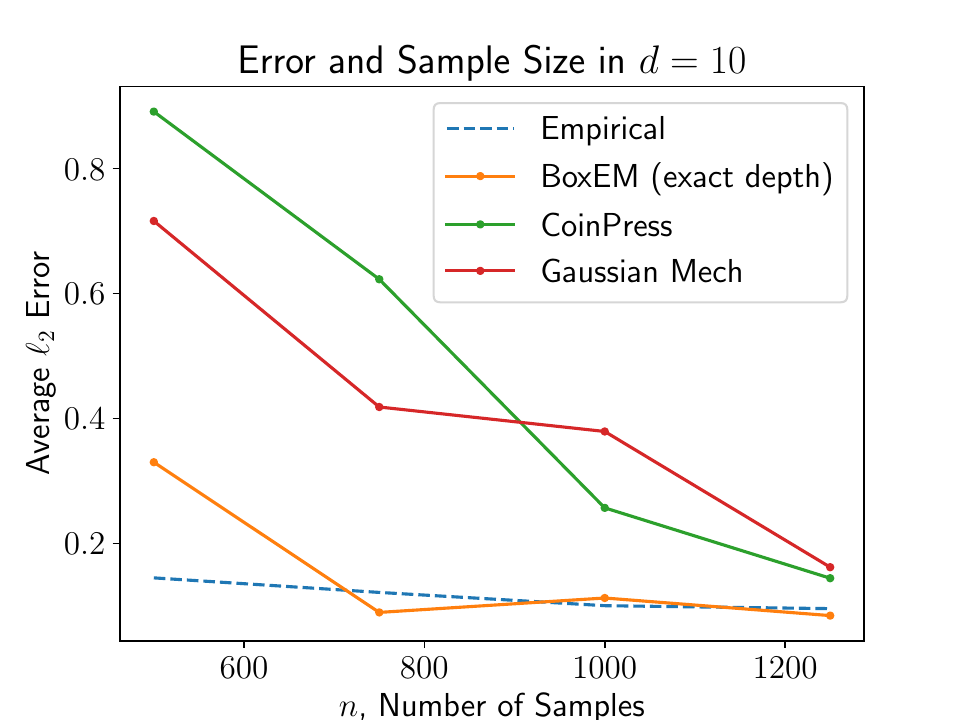}}
\caption{The analog of Fig.~\ref{fig:error_sample_size} in $d=10$ dimensions using \texttt{volesti}.
The Tukey mechanism introduces error comparable to the empirical error at small samples sizes. Results are from one trial.}
\label{fig:error_sample_size_d10}
\end{figure}

Algorithmically, we selected random-directions hit-and-run with Gaussian cooling to approximate the volume, as these are analyzed in the literature.
The distribution of hit-and-run rapidly approaches the uniform distribution over the polytope, mixing in $\tilde{O}(d^3)$ steps of the Markov chain \cite{lovasz1993random,lovasz2003hit}.
Each step of the chain appears to require a logarithmic number evaluations to whether or not a point lies in the polytope, requiring $\tilde{O}(kd)$ time.
Combined with appropriate preprocessing and the annealing technique known as Gaussian cooling, one can obtain an approximation algorithm for the volume requiring, in our case, $\tilde{O}(k d^5)$ time \cite{lovasz2006simulated}.
However, as we discuss above, the constants involved imply we cannot currently apply these theorems to practical algorithms.

Instead, we select hyperparameters heuristically to find a reasonable tradeoff between empirical performance and running time.
In our experiments, we settled on roughly 20 seconds per volume computation, yielding an overall running time of $10 n$ seconds, since there are at most $n/2$ level sets.
As this code is preliminary, we did not work to optimize the running time.
A careful implementation would take advantage of the specifics of the task.
We need approximate volumes for a family of polytopes $\{P_i\}_{i}$, so the computation could easily be parallelized.
Moreover, for all $i$ we have  $P_i = \{x: Ax\le b_i\}$ for some fixed $A$ and $P_{i+1}\subseteq P_i$.
In modest dimensions, the volumes themselves also change slowly, suggesting that $\mathrm{Vol}(P_{i+1})$ could be quickly approximated relative to a previously computed $\mathrm{Vol}(P_i)$ using only a few samples from $P_i$.

\section{Conclusions and Practical Guidance}

In this work, we provide a new implementation of differentially private estimator for multivariate means.
The algorithm has strong theoretical properties: it is accurate at small sample sizes, robust to adversarial corruptions, and invariant to affine transformations of the data.
Unfortunately, the theoretically analyzed version of the algorithm has large computational demands; it is not clear that it could be made practical in low dimensions.

Building on recent developments in differential privacy, sampling, and robust statistics, we work to overcome the computational barriers.
We first implement an exact version of the Restricted Exponential Mechanism over Tukey depth that is practical in low dimensions.
Next, we consider approximate versions of Tukey depth based on random and axis-aligned directions.
Finally, we explore the use of Monte Carlo polytope volume estimation algorithms in higher dimensions.
We provide the technical tools to use such algorithms as a drop-in subroutine within our implementations and highlight a need for improved theoretical analysis of these (already empirically practical) approximation algorithms.

\editblock
\textbf{Selecting an Algorithm:}
For private mean estimation on small datasets without strong prior knowledge about the location or scale of the data, Tukey depth mechanisms may be the only feasible approach. REM does not require such prior knowledge; the error of BoxEM depends only weakly on the strength of our prior, and in some regimes not at all. 

Given more samples or strong prior knowledge, Gaussian-based mechanisms are a viable solution. However, Tukey depth mechanisms additionally offer robustness and affine invariance: these are often important requirements (for example, guaranteeing accuracy even when our assumptions about the data distribution are slightly wrong).

Among Tukey depth mechanisms, REM and BoxEM with exact or random Tukey depth (as long as the number of directions is $k=\Omega(d)$) provide accuracy in Mahalanobis norm. Even for estimation in Euclidean norm, axis-aligned Tukey depth incurs $\sqrt{d}$ times more error than the two alternatives. This is acceptable in low dimensions, but becomes suboptimal as the dimension grows. 

Overall, we contend that this collection of Tukey depth mechanisms, both BoxEM and REM across exact, random, and axis-aligned depth, represent a promising part of the practical toolkit for differentially private mean estimation.  
\editblockdone

\section*{Acknowledgments}

GB is supported by Microsoft Grant for Customer Experience Innovation and the National Science Foundation under grant numbers 2019844, 2112471, and 2229876.
LZ is supported by a Foundations of Data Science Institute and Simons Institute for the Theory of Computing postdoctoral fellowship, and wishes to acknowledge funding from the European Union (ERC-2022-SYG-OCEAN-101071601). 
Views and opinions expressed are however those of the author(s) only and do not necessarily reflect those of the European Union or the European Research Council Executive Agency. Neither the European Union nor the granting authority can be held responsible for them.

\bibliographystyle{IEEEtran}

\bibliography{bibliography}

\appendices

\begin{table}[t]
\caption{Collected asymptotic running times for various operations, omitting $\log$ factors. $n$: sample size, $d$: dimension, $k$: number of directions with respect to which we compute random Tukey depth, $R$: radius of enclosing ball when convex body is in isotropic position.}
\centering
\begin{tabular}{|l|c|c|}
\hline
\textbf{Task}                        & \textbf{Time} & \textbf{References}                         \\ \hline
Exact Tukey Depth $T_x(y)$           &  $n^{d-1}$     &         \cite{dyckerhoff2016exact}                                    \\ \hline
Construct One Tukey Polytope         &   $n^d$   & \cite{liu2019fast}    \\ \hline
Construct All Tukey Polytopes        &  $n^d$         & \cite{fojtik2023exact}     \\ \hline
Exact Volume of Polytope             &  $k^d$         & \cite{chazelle1993optimal}                                       \\ \hline
Construct All Random Tukey Polytopes        & $nkd$         &                                             \\ \hline
Approx. Polytope Volume (Ball Walk)  &    $d^4R^2$            &     \cite{lovasz1993random}                                      \\ \hline
Approx. Polytope Volume (Best-Known) &        $k^{4/3}d^{10/3}$        & \cite{gatmiry2023sampling} \\ \hline

\update{Approx. Polytope Volume (Hit-and-Run)} & $kd^5$ & \cite{lovasz2003hit, lovasz1993random, lovasz2006simulated} \\ \hline
\end{tabular}
\label{tab:asymptotic_times}
\end{table}

\section{Implementations via Approximate Volumes and Sampling}\label{app:approximate}
As in \cite{kaplan2020find}, Tukey depth mechanisms can work even when we have only approximations to the volumes and we sample almost-uniformly from the convex sets.
Of course, we have to track the effects of the approximations. As discussed in~\Cref{sec:approximate_volume}, finding appropriate and practical approximation algorithms to plug into this approach is challenging.  In this section, we present an analysis which, given formal approximation guarantees of a volume oracle and uniform sampler, retrieves the resulting privacy guarantees of our algorithm. 

We note that the calculations in this section apply to any of the notions of depth we consider, including the exact and random Tukey depths.

\subsection{Computational Details} 
For the volume approximation, we assume access to a volume oracle (Def.~\ref{def:volume-oracle}) which replaces the exact volume computations of lines~\ref{line:volumes-in}-\ref{line:volumes-out} in Algorithm~\ref{alg:discrete_tukey_vol}.

The following definition extends the PAC volume oracle of Definition~\ref{def:volume-oracle-initial} to admit multiple polytopes at once, as it may be possible to share computation across polytopes.
\begin{defn}\label{def:volume-oracle}
    An \emph{$(\eta,\beta)$-PAC volume oracle} $\cV_{\eta,\beta}$ accepts a list of polytopes $\{P_i\}_{i\in \mc{I}}$, each presented in H-representation as a matrix-vector pair $(A^{(i)},b^{(i)})$, and parameters $\eta,\beta\in (0,1)$ and returns a list $\{\hat{V}_i\}_{i\in\mc{I}}$ such that, with probability at least $1-\beta$, for all $i\in \mc{I}$ we have
    \[
        (1-\eta)\mathrm{Vol}(P_i)\le \hat V_i \le (1+\eta)\mathrm{Vol}(P_i).
    \]
\end{defn}
One would then apply this to the polytopes corresponding to the (random) Tukey upper-level sets.

In the exact case, our algorithm draws $L=\ell$ in~\cref{line:draw-level} with probability $p_\ell$ as computed in~\cref{eq:prob} above. Similarly, we will now use these approximate volumes to choose $L=\ell$ in~\cref{line:draw-level} with probability
\begin{equation}\label{eq:approx-prob}
    p_\ell' = C' \cdot V'_{\ge \ell} e^{\eps_e \ell/2}(1- e^{-\eps_e/2}),
\end{equation}
where $C'=\left((1-e^{-\eps_e/2})\sum_{\ell=t}^{\floor{n/2}} e^{\eps_e\ell/2} V'_{\ge \ell}\right)^{-1}$.

After choosing level $L$, the algorithm will use an approximate uniform sampler to sample a point from $\mc{Y}_{\geq L}$. 

\begin{defn}\label{def:approx-unif-sampler}
    Let $\mc{Y}$ be a convex body in $\R^d$ and $\tau,\zeta\in(0,1)$. An {\em approximate uniform sampler} $\cS_{\tau,\zeta}$ is an algorithm which, with probability $1-\zeta$, returns a random point from $\mc{Y}$ with pdf $U'$ whose total variation distance from  $\mathrm{Uniform}(\mc{Y})$ is at most $\tau$.
\end{defn}

With these primitives in hand, an implementation of REM would follow as presented in Algorithm~\ref{alg:discrete_tukey_vol_approx}.

\begin{figure}[!t]
\removelatexerror
\begin{algorithm}[H] \caption{REM  %
via Approximate Volumes and Sampling}\label{alg:discrete_tukey_vol_approx}
\begin{algorithmic}[1]
\Require{Dataset $x = (x_1,\dots,x_{n})^T \in \mathbb{R}^{n\times d}$.
Privacy parameters: $\eps,\delta>0$. Minimum threshold $t$.}

\State Adjust parameters $\eps_p \gets \frac{\eps}{6}$, $\eps_e\gets \frac{2\eps}{5}$, $\delta_p\gets \frac{\delta}{2}$, $\delta_e\gets \frac{\delta}{4e^{11\eps/20}}$. \Comment{Split $\eps$ equally among PTR, $\cM'$.}
\State Set volume approximation parameters $\eta \gets \frac{e^{\eps/20}-1}{e^{\eps/20}+1}$, $\beta\gets \min\{\frac{1}{4},\frac{\delta}{8(2e^\eps+1)}\}$.
\State Set approximate sampling parameters $\zeta\gets \min\{\frac{1}{4}, \frac{\delta}{8(2e^\eps+1)}\}$, $\tau\gets \frac{\delta}{4e^{11\eps/20}(1+e^{\eps/2})}$.

\Algphase{Volume Computations}
\update{   \State $\{V'_{\geq \ell}\}_{\ell\in[\floor{n/2}]} \gets \cV_{\eta,\beta}\left(\{\cY_{\ge \ell}\}_{\ell=t}^{\floor{n/2}}\right)$ \label{line:volumes-approx} \Comment{Approximately compute the volumes of polytopes $\{\cY_{\ge \ell}\}_{\ell=t}^{\floor{n/2}}$ corresponding to (random) Tukey upper-level sets.} }

\Algphase{PTR check}
\State $\mathcal{K} \gets \left\{k \in[0,t): \exists g>0 \text{ s.t. } \frac{V'_{\geq t-k-1}}{V'_{\geq t+k+g+1}} \cdot e^{-\frac{g\eps_e}{2}} \le \frac{\delta_e}{4e^{\eps_e}} \right\}$
\State $\score(x) \gets \max \left\{ \mathcal{K}, -1\right\}$ 
\If{$\score(x) + \mathrm{Lap}(\frac{1}{\eps_p})< \frac{\log(1/2\delta_p)}{\eps_p}$} 
    \Return \texttt{FAIL}. \label{line:safety-score-check-approx}
\EndIf

\Algphase{Sampling from REM: $\mc{M}'$}
\State Draw level $L\in \{t,\ldots,\floor{n/2}\}$ with $\Pr[L=\ell]=p_\ell'$ as in~\eqref{eq:approx-prob}. 
\label{line:draw-level-approx}
\State \Return $\tilde\mu' \gets \cS_{\tau,\zeta}(\mc{Y}_{\geq L})$
\label{line:uniform-sampling-approx}

\end{algorithmic}
\end{algorithm}
\end{figure}

\subsection{Privacy Guarantees} 
We can show the following equivalent of Lemma 5.7 from~\cite{kaplan2020find} for the REM.

\begin{lemma}\label{lem:REM-privacy-approx}
Let dataset $x$. Let $\mc{M}$ be the REM of lines~\ref{line:draw-level}-\ref{line:uniform-sampling} in Algorithm~\ref{alg:discrete_tukey_vol} and $\mc{M}'$ the REM of lines~\ref{line:draw-level-approx}-\ref{line:uniform-sampling-approx} in Algorithm~\ref{alg:discrete_tukey_vol_approx} which uses approximate volume oracle $\cV_{\eta,\beta}$ and uniform sampler $\cS_{\tau,\zeta}$. 
Let $G$ be the event that the guarantees of Def.~\ref{def:volume-oracle},~\ref{def:approx-unif-sampler} hold and $B\subseteq\R^d$ be any measurable set. 
Then
\begin{align*}
& \frac{1-\eta}{1+\eta} \Pr[\mc{M}(x)\in B] -\tau\\ 
& \leq \Pr[\mc{M}'(x)\in B \mid G] \\
& \leq \frac{1+\eta}{1-\eta}\Pr[\mc{M}(x)\in B] + \tau.
\end{align*}
\end{lemma}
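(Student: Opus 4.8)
\textbf{The plan} is to bound the output distribution of $\mc{M}'$ against that of $\mc{M}$ by tracking the two sources of error separately---the volume approximation (which perturbs the level-selection probabilities $p_\ell'$ versus $p_\ell$) and the approximate sampler (which perturbs the conditional distribution given a chosen level). Conditioning on the good event $G$, I write $\Pr[\mc{M}'(x)\in B\mid G]$ as a sum over levels $\ell$ of $p_\ell'\cdot U_\ell'(B)$, where $U_\ell'$ is the (approximately uniform) sampling distribution on $\mc{Y}_{\ge \ell}$, and similarly $\Pr[\mc{M}(x)\in B]=\sum_\ell p_\ell \cdot U_\ell(B)$ with $U_\ell=\mathrm{Uniform}(\mc{Y}_{\ge \ell})$. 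The goal is to relate these two sums term-by-term.

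First I would handle the level probabilities. Under $G$, the PAC oracle gives $(1-\eta)\mathrm{Vol}(\mc{Y}_{\ge \ell})\le V'_{\ge \ell}\le (1+\eta)\mathrm{Vol}(\mc{Y}_{\ge \ell})$ for every $\ell$. Since $p_\ell$ and $p_\ell'$ share the identical form~\eqref{eq:prob},~\eqref{eq:approx-prob} and differ only by replacing $\mathrm{Vol}(\mc{Y}_{\ge \ell})$ with $V'_{\ge \ell}$ (the factors $e^{\eps_e\ell/2}(1-e^{-\eps_e/2})$ being common), the numerator of $p_\ell'$ is within $(1\pm\eta)$ of that of $p_\ell$ and the normalizing sum is likewise within $(1\pm\eta)$. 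The worst case compounds these into a ratio $p_\ell'/p_\ell\in\left[\frac{1-\eta}{1+\eta},\frac{1+\eta}{1-\eta}\right]$, exactly the volume-ratio bound already sketched in~\Cref{sec:approximate_volume}. Second, for the sampler, Definition~\ref{def:approx-unif-sampler} gives $|U_\ell'(B)-U_\ell(B)|\le\tau$ under $G$ for the chosen level, so I can substitute $U_\ell(B)+[\text{error at most }\tau]$ inside each term.

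Combining, for the upper bound I would write
\begin{align*}
    \Pr[\mc{M}'(x)\in B\mid G]
    &= \sum_\ell p_\ell'\, U_\ell'(B)
    \le \sum_\ell \frac{1+\eta}{1-\eta}\,p_\ell\,(U_\ell(B)+\tau) \\
    &\le \frac{1+\eta}{1-\eta}\Pr[\mc{M}(x)\in B] + \frac{1+\eta}{1-\eta}\tau\sum_\ell p_\ell,
\end{align*}
and since $\sum_\ell p_\ell=1$ the additive slack is $\frac{1+\eta}{1-\eta}\tau$. The stated bound uses the cleaner additive term $\tau$, so the plan is to absorb the $\frac{1+\eta}{1-\eta}$ factor on the additive piece into the multiplicative factor or, more simply, to note that $U_\ell'(B)\le U_\ell(B)+\tau$ and $p_\ell'\le\frac{1+\eta}{1-\eta}p_\ell$ can be applied so that only one of the two errors multiplies the probability mass while the other contributes the clean additive $\tau$. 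The lower bound is symmetric, using $U_\ell'(B)\ge U_\ell(B)-\tau$ and $p_\ell'\ge\frac{1-\eta}{1+\eta}p_\ell$.

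\textbf{The main obstacle} I anticipate is bookkeeping the additive $\tau$ term cleanly so it emerges as a bare $\tau$ rather than $\frac{1+\eta}{1-\eta}\tau$, since a naive term-by-term bound couples the multiplicative and additive errors. The fix is to separate the two approximations in the right order: apply the sampler error $\tau$ at the level of distributions (giving $U_\ell'(B)\le U_\ell(B)+\tau$, uniformly in $\ell$), then pull the total-variation slack $\tau\sum_\ell p_\ell'=\tau$ outside before applying the volume multiplicative factor only to the $\sum_\ell p_\ell' U_\ell(B)$ piece. A secondary subtlety is that both $\mc{M}'$ and $\mc{M}$ must be compared on the \emph{same} restricted range (levels $\ell\ge t$), which is automatic since both share the threshold $t$ and the level set $\{t,\dots,\floor{n/2}\}$; the PTR check lies outside $\mc{M},\mc{M}'$ and so does not enter this lemma.
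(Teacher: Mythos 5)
Your proposal is correct and follows essentially the same route as the paper: condition on $G$, expand $\Pr[\mc{M}'(x)\in B\mid G]$ as a sum over levels of $p_\ell'$ times the approximate-sampler conditional probability, bound the latter within $\pm\tau$ of the uniform one, and establish $\frac{1-\eta}{1+\eta}p_\ell\le p_\ell'\le\frac{1+\eta}{1-\eta}p_\ell$ from the PAC oracle guarantee (the paper's Eq.~\eqref{eq:approx-prob-guarantee}). Your extra care in ordering the two error bounds so the additive slack comes out as a bare $\tau$ (using $\sum_\ell p_\ell'=1$) is a valid resolution of a detail the paper's sketch leaves implicit.
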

\begin{IEEEproof}[Proof Sketch]
To prove the lemma, as in~\cite{kaplan2020find}, we condition on G and expand $\Pr[\mc{M}'(x)\in B]= \sum_{\ell=t}^{\floor{n/2}} \Pr[\mc{M}'(x)\in B \mid \mc{M}'(x)\in \mc{Y}_{\ge \ell}] \cdot \Pr[\mc{M}'(x)\in \mc{Y}_{\ge \ell}] = \sum_{\ell=t}^{\floor{n/2}} (\frac{\mathrm{Vol}(B)}{\mathrm{Vol}(\mc{Y}_{\ge \ell})} \pm \tau) \cdot p_{\ell}'$, where the latter holds by the guarantees of the approximate uniform sampler. 
By the guarantees of the PAC volume oracle, we also get that
\begin{equation}\label{eq:approx-prob-guarantee}
\frac{1-\eta}{1+\eta} p_\ell \leq p_\ell' \leq \frac{1+\eta}{1-\eta}p_\ell,
\end{equation}
which would complete the proof.
\end{IEEEproof}

Lemma~\ref{lem:REM-privacy-approx} allows us to connect the privacy parameters of $\cM'$ to the ones of $\cM$: if $G_x$ and $G_y$ are the events that the guarantees of Def.~\ref{def:volume-oracle} and~\ref{def:approx-unif-sampler} hold for the executions of Algorithm~\ref{alg:discrete_tukey_vol_approx} on neighboring datasets $x$ and $y$, respectively, and $G=G_x\land G_y$, then \begin{equation*}
    \cM(x)\approx_{(\eps_e,\delta_e)}\cM(y) \Rightarrow
  \cM'(x)_{\mid G}\approx_{(\eps_2,\delta_2)}\cM'(y)_{\mid G}
\end{equation*} for
\begin{equation}\label{eq:relative-privacy}
  \eps_2=\eps_e+2\log\frac{1+\eta}{1-\eta}, ~~\delta_2=\frac{1+\eta}{1-\eta}\delta_e+\left(1+e^{\eps_e+2\log\frac{1+\eta}{1-\eta}}\right)\tau.
\end{equation}

Next, it remains to track the effect of the approximations on the distance-to-unsafety check. 

\begin{lemma}\label{lem:score-sensitivity-approx} 
Let $G_x$ and $G_y$ be the events that the guarantees of Def.~\ref{def:volume-oracle} and~\ref{def:approx-unif-sampler} hold for the executions of Algorithm~\ref{alg:discrete_tukey_vol_approx} on neighboring datasets $x$ and $y$, respectively, and condition on those events. 
The sensitivity of the approximate distance function $\score'$ (\cref{line:safety-score-check-approx} in Algorithm~\ref{alg:discrete_tukey_vol_approx}) is $$|\score'(x)-\score'(y)|\leq 2\left(\frac{4}{\eps_e}\log\frac{1+\eta}{1-\eta}+1\right).$$ 
The PTR check then satisfies $(8\frac{\eps_p}{\eps_e}\log\frac{1+\eta}{1-\eta}+2\eps_p)$-differential privacy. 
Moreover, $D_H(x,\mathtt{UNSAFE}_{\eps_e,\delta_e,t}) > \score'(x)$.
\end{lemma}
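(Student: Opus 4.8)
The plan is to treat all three claims through a single \emph{potential-function} reformulation of the score. Observe that the set $\mathcal{K}$ in \cref{line:safety-score-check-approx} is downward closed in $k$, so $\score'(x)=\max\{k : \Phi'_x(k)\le \log(\delta_e/(4e^{\eps_e}))\}$, where $\Phi'_x(k)=\min_{g>0}\big[\log V'_{\ge t-k-1}-\log V'_{\ge t+k+g+1}-\tfrac{g\eps_e}{2}\big]$ is evaluated on the approximate volumes returned in \cref{line:volumes-approx}. The PTR-privacy claim (second sentence) is then immediate from the sensitivity claim (first sentence): adding $\mathrm{Lap}(1/\eps_p)$ to a $\Delta$-sensitive statistic and thresholding is $(\Delta\eps_p)$-differentially private by the Laplace mechanism and post-processing, and substituting $\Delta=2(\tfrac{4}{\eps_e}\log\tfrac{1+\eta}{1-\eta}+1)$ gives exactly $8\tfrac{\eps_p}{\eps_e}\log\tfrac{1+\eta}{1-\eta}+2\eps_p$. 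So the real work is in bounding the sensitivity of $\score'$ and in showing it lower-bounds $D_H(x,\mathtt{UNSAFE})$.

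For the sensitivity, I would combine two ingredients. First, Tukey depth has sensitivity one under swap adjacency, which gives the nested containments $\mc{Y}_{\ge\ell}(x)\subseteq\mc{Y}_{\ge\ell-1}(y)$ for neighbors $x,y$; hence the \emph{exact} upper-level volumes shift by one index, $V^x_{\ge\ell}\le V^y_{\ge\ell-1}$ and $V^x_{\ge\ell}\ge V^y_{\ge\ell+1}$. Second, conditioned on the oracle's success event $G$, Def.~\ref{def:volume-oracle} gives $V'_{\ge\ell}\in[(1-\eta)V_{\ge\ell},(1+\eta)V_{\ge\ell}]$, so every log-volume-ratio lies within $\gamma:=\log\tfrac{1+\eta}{1-\eta}$ of its exact value. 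Chaining these yields the neighbor relation $\Phi'_x(k)\le 2\gamma+\Phi'_y(k+1)$ and its symmetric counterpart. To convert this additive $2\gamma$ slack and unit index shift into a bound on $|\score'(x)-\score'(y)|$, I would establish a matching \emph{growth rate}: using the (approximate) monotonicity $V'_{\ge a}\le e^{\gamma}V'_{\ge b}$ for $a>b$, a one-step shift of the optimal $g$ shows $\Phi'_x(k+1)\ge\Phi'_x(k)+(\tfrac{\eps_e}{2}-\gamma)$. Since $\Phi'_x$ is increasing at rate $\approx\eps_e/2$, the maximizer defining $\score'$ can move by at most $O(\gamma/\eps_e)$ from the index shift together with the $2\gamma$ slack; tracking the constants then yields the stated $2(\tfrac{4}{\eps_e}\gamma+1)$.

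For the lower bound $D_H(x,\mathtt{UNSAFE})>\score'(x)$, I would start from the exact safety certificate of \cite{amin2022easy,brown2021covariance}: membership of $k$ in the exact-volume version of $\mathcal{K}$ certifies that every dataset within Hamming distance $k$ of $x$ remains safe for $\cM$. It then suffices to show that the approximate condition implies the exact one under the parameters of \cref{alg:discrete_tukey_vol_approx}. Applying $V'_a/V'_b\le e^{\gamma}V_a/V_b$ to the inequality defining $\mathcal{K}'$, the approximate certificate with right-hand side $\delta_e/(4e^{\eps_e})$ implies the exact certificate with a right-hand side inflated by $e^{\gamma}$; the choices $\eta=\tfrac{e^{\eps/20}-1}{e^{\eps/20}+1}$ and $\delta_e=\delta/(4e^{11\eps/20})$ are calibrated precisely so that this inflated threshold still meets the safety requirement. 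Hence $\score'(x)=\max\mathcal{K}'$ remains a valid (strict) lower bound on the distance to unsafety.

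I expect the main obstacle to be this third part rather than the mechanical sensitivity computation: it requires re-deriving the exact safety characterization with enough slack to absorb the multiplicative volume error, and the key technical wrinkle is that approximate volumes are only monotone up to a factor $e^{\gamma}$, so the monotonicity and convexity facts that the exact analysis uses freely must be re-established in approximate form before the certificate can be pushed through. A secondary but error-prone task is the privacy-budget bookkeeping, namely verifying that the $\gamma$ inflation here, together with the sampling error $\tau$ and the failure probabilities $\beta,\zeta$ folded in elsewhere, all fit within the allocation of \cref{eq:relative-privacy} and the global $(\eps,\delta)$ target.
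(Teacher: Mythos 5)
Your overall architecture matches the paper's: a witness-shifting argument for the sensitivity of $\score'$ (absorbing the $\tfrac{1+\eta}{1-\eta}$ volume distortions into shifts of $k$ and $g$, with a factor of two for swap versus add/remove adjacency), the Laplace mechanism plus post-processing for the PTR privacy claim, and, for the final claim, showing that the approximate certificate implies the exact one so that BGSUZ's safety lemma (Lemma 3.8 of \cite{brown2021covariance}) applies. The paper's proof is itself only a sketch following Lemma 3.6 of \cite{amin2022easy}, and for the last part it argues slightly more directly than you do: under $G_x$ the approximate ratio bound converts into the exact ratio bound at the \emph{same} threshold (the $\tfrac{1+\eta}{1-\eta}$ and $\tfrac{1-\eta}{1+\eta}$ factors cancel against the threshold calibration), giving $\score(x)\ge\score'(x)$ outright rather than an ``inflated-threshold'' exact certificate that must be re-absorbed into the definition of $\mathtt{UNSAFE}$.

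The one genuine gap is quantitative, in the sensitivity step. Writing $\gamma=\log\tfrac{1+\eta}{1-\eta}$, your route converts the additive $2\gamma$ slack in $\Phi'_y(k_x-1)\le 2\gamma+\Phi'_x(k_x)$ into a $k$-shift by iterating the growth-rate bound $\Phi'(k+1)\ge\Phi'(k)+(\tfrac{\eps_e}{2}-\gamma)$. Each unit step in $k$ therefore only buys $\tfrac{\eps_e}{2}-\gamma$ of progress, so you need roughly $\tfrac{2\gamma}{\eps_e/2-\gamma}=\tfrac{4\gamma}{\eps_e-2\gamma}$ extra steps, not $\tfrac{4\gamma}{\eps_e}$; with the paper's parameter choices ($\gamma=\eps/20$, $\eps_e=2\eps/5$) this is a $4/3$ inflation of the stated constant, and it propagates into the PTR privacy parameter and Theorem~\ref{thm:cond-privacy-approx}. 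The paper avoids this loss by never iterating in $k$: from a witness $(k_x,g_x)$ it directly takes $g_y=g_x+\tfrac{4\gamma}{\eps_e}$ and $k_y=k_x-\tfrac{4\gamma}{\eps_e}-1$, so that the denominator index is unchanged (up to the unit adjacency shift), the numerator index only moves in the helpful direction, and the $e^{-2\gamma}$ gained from the $g$-shift exactly cancels the two approximation factors. So your assertion that ``tracking the constants then yields the stated bound'' does not hold for the argument as you have set it up; you should replace the growth-rate iteration with the direct joint $(k,g)$ shift to recover the lemma as stated.
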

\begin{IEEEproof}[Proof Sketch]
The proof follows Lemma 3.6 of ~\cite{amin2022easy}. 
We can show that for neighboring datasets $x,y$ (in the add/remove model), and fixed $k_x>0, g_x>0$, $$\frac{V'_{\geq t-k_y-1, y}}{V'_{\geq t+k_y+g_y+1, x}} \cdot e^{-g_y\eps_e /2} \leq \frac{V'_{\geq t-k_x-1, x}}{V'_{\geq t+k_x+g_x+1, x}} \cdot e^{-g_x\eps_e /2},$$ for $g_y=g_x+\frac{4}{\eps_e}\log\frac{1+\eta}{1-\eta}>0$ and $k_y=k_x-\frac{4\log\frac{1+\eta}{1-\eta}}{\eps_e}-1$. 
So $\score'(y)\geq \score'(x)-\frac{4}{\eps_e}\log\frac{1+\eta}{1-\eta}-1$. 
The remaining steps of the proof are the same (and we account for a factor of $2$ due to switching to the swap model).

Moreover, we can relate the approximate distance using exact volumes $\score(x)$ (\cref{line:safety-score-check} in Algorithm~\ref{alg:discrete_tukey_vol}) to $\score'(x)$. 
Specifically, $\score'(x)=k$ means that there exists some $g>0$ so that $\frac{V'_{\geq t-k-1, x}}{V'_{\geq t+k+g+1, x}} \cdot e^{-g\eps_e /2}\leq \frac{1-\eta}{1+\eta}\delta_e(4e^{\eps_e})^{-1}$. 
Under $G_x$, this implies that $\frac{V_{\geq t-k-1, x}}{V_{\geq t+k+g+1, x}} \cdot e^{-g\eps_e /2}\leq \frac{1+\eta}{1-\eta}\frac{1-\eta}{1+\eta}\delta_e(4e^{\eps_e})^{-1}=\delta_e(4e^{\eps_e})^{-1}$ so $\score(x)\geq k=\score'(x)$. 
By Lemma 3.8 in~\cite{brown2021covariance}, $D_H(x,\mathtt{UNSAFE}_{\eps_e,\delta_e,t}) > \score(x)$, completing the proof.
\end{IEEEproof}

We derive the overall privacy guarantee of Algorithm~\ref{alg:discrete_tukey_vol_approx}: we first condition on the event $G$ that the approximate volume computations and sampling are successful and then account for this event, using the following general proposition:

\begin{proposition}\label{prop:cond-privacy}
If $\cA(x)_{|G}\approx_{(\eps,\delta)}\cA(y)_{|G}$, then it holds that $\cA(x)\approx_{(\eps,\delta')}\cA(y)$ for $\delta'=\delta+\Pr[\bar{G}]\left(e^\eps\Pr[G]^{-1}+1\right)$.
\end{proposition}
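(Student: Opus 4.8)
The plan is to prove each of the two inequalities in the definition of $(\eps,\delta')$-indistinguishability separately; since both the hypothesis $\cA(x)_{\mid G}\approx_{(\eps,\delta)}\cA(y)_{\mid G}$ and the conclusion $\cA(x)\approx_{(\eps,\delta')}\cA(y)$ are symmetric in $x$ and $y$, it suffices to establish one direction and invoke symmetry for the other. Fix a measurable set $B$. The idea is to split the unconditional output probability $\Pr[\cA(x)\in B]$ into its contribution on the good event $G$ and on its complement $\bar G$, control the former using the conditional guarantee, and absorb the latter into the additive slack.

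Concretely, I would first write
\[
    \Pr[\cA(x)\in B] = \Pr[\cA(x)\in B\mid G]\,\Pr[G] + \Pr[\cA(x)\in B\mid \bar G]\,\Pr[\bar G] \le \Pr[\cA(x)\in B\mid G] + \Pr[\bar G],
\]
using $\Pr[G]\le 1$ and $\Pr[\cA(x)\in B\mid \bar G]\le 1$. Applying the hypothesis bounds the first term by $e^\eps\Pr[\cA(y)\in B\mid G]+\delta$. To return from the conditional distribution of $y$ to its unconditional one, I would use $\Pr[\cA(y)\in B\mid G] = \Pr[\cA(y)\in B, G]/\Pr[G] \le \Pr[\cA(y)\in B]/\Pr[G]$, which yields
\[
    \Pr[\cA(x)\in B] \le \frac{e^\eps}{\Pr[G]}\,\Pr[\cA(y)\in B] + \delta + \Pr[\bar G].
\]

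The final step is the arithmetic that converts the spurious multiplicative factor $1/\Pr[G]$ into the target $e^\eps$ plus additive slack. Writing $1/\Pr[G] = 1 + \Pr[\bar G]/\Pr[G]$ and bounding the trivially-true $\Pr[\cA(y)\in B]\le 1$ only on the excess term gives
\[
    \Pr[\cA(x)\in B] \le e^\eps\,\Pr[\cA(y)\in B] + \delta + \Pr[\bar G]\!\left(\frac{e^\eps}{\Pr[G]}+1\right),
\]
which is exactly the claim with $\delta' = \delta + \Pr[\bar G]\,(e^\eps\Pr[G]^{-1}+1)$. The only genuine obstacle is the bookkeeping around $\Pr[G]$: the conditioning event may have different probabilities under the two executions, so to obtain a single clean constant I would interpret $\Pr[G]$ as a uniform lower bound and $\Pr[\bar G]$ as a uniform upper bound on the failure probability (both available here, since the volume oracle and sampler succeed with probability at least $1-\beta$ and $1-\zeta$ independently of the input). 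With that convention, the reverse direction is identical after swapping the roles of $x$ and $y$, completing the proof.
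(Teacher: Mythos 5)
Your proof is correct and follows essentially the same route as the paper's: split on $G$ versus $\bar G$, apply the conditional guarantee, convert $\Pr[\cA(y)\in B\mid G]$ back to the unconditional probability via the factor $\Pr[G]^{-1}$, and absorb the excess $e^\eps(\Pr[G]^{-1}-1)=e^\eps\Pr[\bar G]\Pr[G]^{-1}$ into the additive slack by bounding $\Pr[\cA(y)\in B]\le 1$. Your added remark that $\Pr[G]$ should be read as input-independent (so the same constant serves both directions) is a sensible clarification the paper leaves implicit.
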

\begin{IEEEproof}
Let $x,y$ be neighboring datasets and $B$ be any measurable set.
\begin{align*}
    & \Pr[A(x)\in B] \\
    & \leq \Pr[A(x)\in B \mid G] + \Pr[\bar{G}] \\
    & \leq e^\eps\Pr[A(y)\in B \mid G] + \delta + \Pr[\bar{G}] \tag{by assumption}\\
    & \leq e^\eps\Pr[A(y)\in B]\Pr[G]^{-1}+\delta +\Pr[\bar{G}] \\
    & = e^\eps\left(\Pr[A(y)\in B] + \Pr[A(y)\in B] (\Pr[G]^{-1}-1)\right)\\
    & \hspace{0.1\columnwidth} +\delta+\Pr[\bar{G}] \\
    & \leq e^\eps\Pr[A(y)\in B]+e^\eps(\Pr[G]^{-1}-1)+\delta+\Pr[\bar{G}]\\
    &= e^\eps\Pr[A(y)\in B]+e^\eps\Pr[\bar{G}]\Pr[G]^{-1}+\delta+\Pr[\bar{G}] 
\end{align*}
This completes the proof of the proposition. 
\end{IEEEproof}

\begin{thm}\label{thm:cond-privacy-approx}
Algorithm~\ref{alg:discrete_tukey_vol_approx} is $(\eps,\delta)$-DP for $$\eps=\left(8\frac{\eps_p}{\eps_e}\log\frac{1+\eta}{1-\eta}+2\eps_p\right)
+\left(\eps_e+2\log\frac{1+\eta}{1-\eta}\right)$$
and \begin{dmath*}
\delta =2(2e^{\eps}+1)(\beta+\zeta) + \max\Big\{\delta_p, e^{8\frac{\eps_p}{\eps_e}\log\frac{1+\eta}{1-\eta}+2\eps_p}\left(\frac{1+\eta}{1-\eta}\delta_e+\left(1+e^{\eps_e+2\log\frac{1+\eta}{1-\eta}}\right)\tau\right)\Big\}.
\end{dmath*}
\end{thm}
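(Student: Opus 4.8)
The plan is to assemble the privacy guarantee from the three ingredients already established—\Cref{lem:REM-privacy-approx} (equivalently~\eqref{eq:relative-privacy}), \Cref{lem:score-sensitivity-approx}, and \Cref{prop:cond-privacy}—following the conditioning strategy sketched in the text. First I would fix neighboring datasets $x,y$ and condition on the good event $G = G_x \land G_y$ that both the volume oracle $\cV_{\eta,\beta}$ and the sampler $\cS_{\tau,\zeta}$ meet their guarantees on both executions. Everything downstream is analyzed under $G$; the failure mass is folded into $\delta$ only at the very end.

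Under $G$, the proof combines the two stages of \Cref{alg:discrete_tukey_vol_approx} exactly as in the exact-volume analysis of BGSUZ, with each stage's parameters replaced by their perturbed counterparts. The PTR check is $(\eps_{\mathrm{PTR}}, \delta_p)$-differentially private with $\eps_{\mathrm{PTR}} = 8\frac{\eps_p}{\eps_e}\log\frac{1+\eta}{1-\eta} + 2\eps_p$ by \Cref{lem:score-sensitivity-approx}. The same lemma's inequality $D_H(x, \mathtt{UNSAFE}_{\eps_e,\delta_e,t}) > \score'(x)$ certifies that whenever the noisy test passes, $x$ is genuinely far from the unsafe set, so the restricted mechanism $\mc{M}'$ may be run safely; conditioned on $G$, \eqref{eq:relative-privacy} then gives $\mc{M}'(x)_{|G} \approx_{(\eps_2,\delta_2)} \mc{M}'(y)_{|G}$ with $\eps_2 = \eps_e + 2\log\frac{1+\eta}{1-\eta}$ and $\delta_2 = \frac{1+\eta}{1-\eta}\delta_e + (1 + e^{\eps_2})\tau$. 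Invoking the PTR-plus-exponential-mechanism composition (which combines $\eps$ additively and $\delta$ via the maximum $\max\{\delta_p,\, e^{\eps_{\mathrm{PTR}}}\delta_2\}$) yields the conditional guarantee $\mc{M}'(x)_{|G} \approx_{(\eps_{|G},\delta_{|G})} \mc{M}'(y)_{|G}$ with $\eps_{|G} = \eps_{\mathrm{PTR}} + \eps_2$ and $\delta_{|G} = \max\{\delta_p,\, e^{\eps_{\mathrm{PTR}}}\delta_2\}$.

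It then remains to discharge the conditioning. A union bound over the two datasets and the two oracles gives $\Pr[\bar G] \le 2(\beta + \zeta)$, and since $\beta,\zeta \le \tfrac14$ we have $\Pr[G] \ge \tfrac12$, so $\Pr[G]^{-1} \le 2$. Applying \Cref{prop:cond-privacy} with $\eps = \eps_{|G}$ converts the conditional guarantee into an unconditional one at the cost of an additive $\Pr[\bar G](e^{\eps}\Pr[G]^{-1} + 1) \le 2(\beta+\zeta)(2e^{\eps} + 1)$ in $\delta$, producing exactly the claimed expressions. The final routine step is to check that the parameter choices in the algorithm's first three lines realize the target budget: the choice $\eta = \frac{e^{\eps/20}-1}{e^{\eps/20}+1}$ gives $\log\frac{1+\eta}{1-\eta} = \eps/20$, whence (using $\eps_p=\eps/6$, $\eps_e=2\eps/5$) both $\eps_{\mathrm{PTR}}$ and $\eps_2$ equal $\eps/2$ and sum to $\eps$, while the chosen $\delta_p,\delta_e,\tau$ make the two branches of the $\delta$ expression collapse to the stated form.

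I would expect the main obstacle to be the middle step—correctly fusing the PTR test with the approximate mechanism. The subtlety is that the test's passing must still certify safety \emph{after} the volumes are perturbed; this is exactly what the lower bound $D_H > \score'$ in \Cref{lem:score-sensitivity-approx} buys, and verifying that the approximate score only ever \emph{under}-estimates the true distance (so the certificate is conservative in the privacy-preserving direction) is the crux. Everything else is bookkeeping: propagating the multiplicative volume distortion $\frac{1+\eta}{1-\eta}$ and the additive sampling slack $\tau$ through the composition, and confirming the union-bound and $\Pr[G] \ge \tfrac12$ estimates required by \Cref{prop:cond-privacy}.
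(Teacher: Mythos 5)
Your proposal is correct and follows essentially the same route as the paper: condition on the good event $G$, apply \Cref{lem:score-sensitivity-approx} to the PTR stage and \eqref{eq:relative-privacy} to the restricted mechanism, compose additively in $\eps$ and via the maximum in $\delta$, then discharge the conditioning with \Cref{prop:cond-privacy} using $\Pr[\bar G]\le 2(\beta+\zeta)$ and $\Pr[G]^{-1}\le 2$. The only difference is that the paper makes the ``PTR-plus-exponential-mechanism composition'' explicit by splitting into the cases $\cM(x)\approx_{(\eps_e,\delta_e)}\cM(y)$ and $\cM(x)\not\approx_{(\eps_e,\delta_e)}\cM(y)$ (the latter forcing $\score'(x)=\score'(y)=-1$ and contributing the $\delta_p$ branch of the max), a standard step you correctly summarize but do not write out.
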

\begin{IEEEproof}[Proof Sketch]
Denote the algorithm by $\cA'$ for short. Let $G$ the event that the guarantees of Def.~\ref{def:volume-oracle} and~\ref{def:approx-unif-sampler} hold for the executions of $\cA'$, on neighboring datasets $x$ and $y$ simultaneously.

We follow the proof of Proposition D.1 from~\cite{brown2021covariance} and split the proof into two cases: $\cM(x)\not\approx_{(\eps_e,\delta_e)} \cM(y)$ and $\cM(x)\approx_{(\eps_e,\delta_e)} \cM(y)$. 
The former case is easy: it implies that $D_H(x,\mathtt{UNSAFE}_{\eps_e,\delta_e,t})=D_H(y,\mathtt{UNSAFE}_{\eps_e,\delta_e,t})=0$ so $\score'(x)=\score'(y)=-1$ by Lemma~\ref{lem:score-sensitivity-approx}. 
The PTR check will then guarantee $(0,\delta_p)$-DP in that case since the probability of failure is the same.

Now, assume $\cM(x)\approx_{(\eps_e,\delta_e)} \cM(y)$. 
By Lemma~\ref{lem:score-sensitivity-approx} the probability of failing under $x,y$ can differ by at most a factor of $e^{\eps_1}$ where $\eps_1=8\frac{\eps_p}{\eps_e}\log\frac{1+\eta}{1-\eta}+2\eps_p$.
For a measurable set $B$, and denoting $F=\mathtt{FAIL}$, we break down the probability:
\begin{align*}
& \Pr[\cA'(x)\in B \mid G] \\
&= \Pr[\cA'(x)\in B\cap F\mid G]+ \Pr[\cA'(x)\in B\setminus F \mid G]\\
&= \Pr[\cA'(x)\in B\mid \cA'(x) \in F \land G]\Pr[\cA'(x) \in F \mid G] \\
&\quad + \Pr[\cA'(x)\in B\mid  \cA(x) \notin F \land G]\Pr[\cA'(x)\notin F \mid G] \\
&\le e^{\eps_1}\biggl(\Pr[\cA'(x)\in B\mid \cA(x) \in F \land G]\Pr[\cA'(y) \in F \mid G] \\
&\quad+ \Pr[\cA'(x)\in B\mid  \cA'(x) \notin F \land G]\Pr[\cA'(y)\notin F \mid G] \biggr).
\end{align*}

Since $B$ either contains $\mathtt{FAIL}$ or it doesn't, $$\Pr[\cA'(x)\in B\mid \cA'(x) \in F \land G]=\Pr[\cA'(y)\in B\mid \cA(y) \in F \land G].$$
Furthermore, not failing means we run $\cM'(x)$. Let $\eps_2=\eps_e+2\log\frac{1+\eta}{1-\eta}$ and $\delta_2=\frac{1+\eta}{1-\eta}\delta_e+\left(1+e^{\eps_e+2\log\frac{1+\eta}{1-\eta}}\right)\tau$. By our assumption and~\cref{eq:relative-privacy}, we have
\begin{align*}
& \Pr[\cA'(x)\in B \mid G] \\
&\le e^{\eps_1}\Big(\Pr[\cA'(y)\in B\cap F \mid G]\\ 
& \hspace{0.1\columnwidth} + \Pr[\cM'(x)\in B \mid G]\Pr[\cA'(y)\notin F \mid G]\Big) \\
&\le e^{\eps_1}\Big(\Pr[\cA'(y)\in B\cap F \mid G]\\ 
& \hspace{0.1\columnwidth} + \left(e^{\eps_2} \Pr[\cM'(y)\in B \mid G]+\delta_2\right)\Pr[\cA'(y)\notin F \mid G]\Big).
\end{align*}
To finish the proof, we simplify:
$$\Pr[\cA'(x)\in B \mid G] \le e^{\eps_1+\eps_2}\Pr[\cA'(y)\in B \mid G] + e^{\eps_1}\delta_2.$$

The overall privacy guarantee for both cases, conditioned on $G$, is then $(\eps_G,\delta_G)$-DP for
$$\eps_G=\left(8\frac{\eps_p}{\eps_e}\log\frac{1+\eta}{1-\eta}+2\eps_p\right)
+\left(\eps_e+2\log\frac{1+\eta}{1-\eta}\right)$$
and
\begin{dmath*}
\delta_G =\max\Big\{\delta_p, e^{8\frac{\eps_p}{\eps_e}\log\frac{1+\eta}{1-\eta}+2\eps_p}\left(\frac{1+\eta}{1-\eta}\delta_e+\left(1+e^{\eps_e+2\log\frac{1+\eta}{1-\eta}}\right)\tau\right)\Big\}.
\end{dmath*}

By Proposition~\ref{prop:cond-privacy}, we take into account the event $G$, which has $\Pr[\bar{G}]\leq 2(\beta+\zeta)<1/2$. We conclude that the overall privacy parameters are 
$$\eps=\eps_G%
\text{ and }
\delta\leq \delta_G+\Pr[\bar{G}](\Pr[G]^{-1}e^{\eps}+1)
.$$
\end{IEEEproof}

\editblock
\section{Univariate Gaussian Mean Estimation}\label{sec:univariate_experiments}

We compare two standard approaches for private mean estimation on univariate Gaussians.
The theoretical analysis is straightforward and well-known, but we are not aware of this specific demonstration.
For an in-depth analysis of a similar phenomenon, see \cite{sarathy2022analyzing}.

In one dimension, BoxEM coincides with the folklore exponential mechanism over quantiles, which assigns higher score to values closer to the median.
We compare this with the standard Gaussian mechanism, which clips its inputs and adds appropriately scaled noise.
In these experiments, we fix the data distribution to be $\mathcal{N}(0,1)$. We run the Gaussian mechanism with $(\eps=1, \delta=10^{-6})$-DP and clipping to $[-5,+5]$.
We run BoxEM with $(\eps=1,\delta=0$)-DP over the range $[-5,+5]$.
Results are averaged over 10,000 trials.

We view this setting as favorable to the Gaussian mechanism, since clipping to $\pm 5$ only makes sense with strong prior knowledge that the true mean lies near zero.
Yet~\Cref{fig:univariate_small_sample} demonstrates that BoxEM is much better at small sample sizes; with 100 samples, BoxEM has roughly 1.5 times the error of the empirical mean, while the Gaussian mechanism has roughly three times the error.
This mirrors the multivariate results from~\Cref{fig:error_sample_size}, and the theoretical analysis of these algorithms supports it.
Formally, let $R$ denote the clipping/bounding value ($R=5$ in these experiments).
The exponential mechanism over quantiles achieves, with high probability,
\[
    \lvert\mu-\hat\mu\rvert \lesssim \underbrace{\frac{1}{\sqrt{n}}}_{\text{sampling error}} + \underbrace{\frac{\log R}{\eps n}}_{\text{privacy error}}.
\]
The Gaussian mechanism, on the other hand, achieves
\[
    \lvert\mu-\hat\mu\rvert \lesssim \frac{1}{\sqrt{n}} + \frac{R \sqrt{\log (1/\delta)}}{\eps n}.
\]
With few samples, the privacy noise will dominate.
As these formulas suggest, and as the simulations bear out, the exponential mechanism over quantiles tends to introduce lower noise from privacy in this regime.

As we see in~\Cref{fig:univariate_large_sample}, the story changes as $n$ grows. 
For both mechanisms, the error from privacy dies off with $\frac 1 n$, faster than the sampling error.
Thus the RMSE of the Gaussian mechanism RMSE approaches that of the empirical mean: $\frac{1}{\sqrt{n}}$ exactly.
The exponential mechanism over quantiles, however, has error that approaches that of the \emph{median}.
For sufficiently large $n$, the RMSE of the median approaches $\sqrt{\frac{\pi}{2 n}}$, worse by a constant factor than the empirical mean. 
Thus, as~\Cref{fig:univariate_large_sample} shows, at large $n$ the error of the Gaussian mechanism is below that of the median (and thus also below that of BoxEM).
In these experiments, this crossover point occurs around $n=1,300$ samples.

\begin{figure}[t]
\centerline{
\includegraphics[width=0.8\columnwidth]{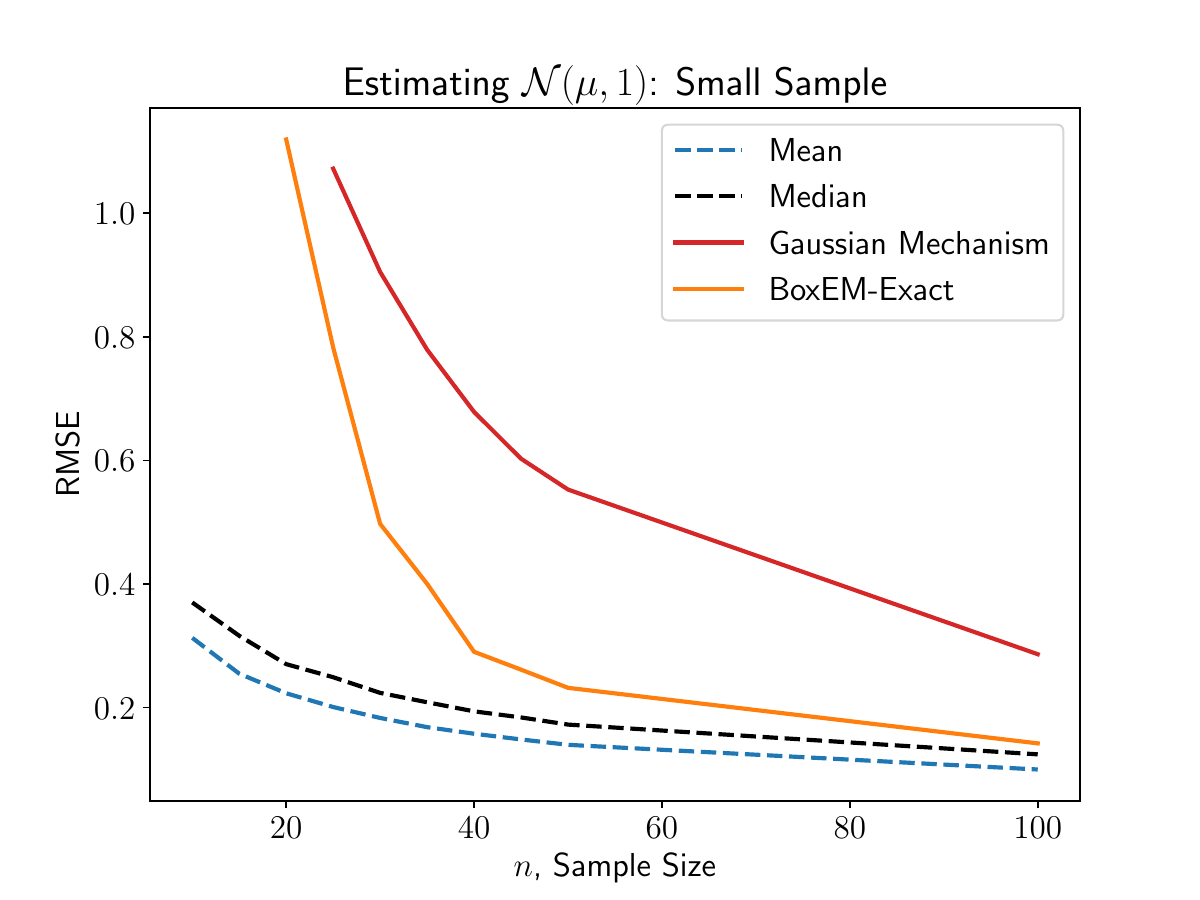}}
\caption{Even at small sample sizes, the exponential mechanism over quantiles has error comparable to that of the (nonprivate) median, and lower than that of the Gaussian mechanism.}
\label{fig:univariate_small_sample}
\end{figure}

\begin{figure}[t]
\centerline{
\includegraphics[width=0.8\columnwidth]{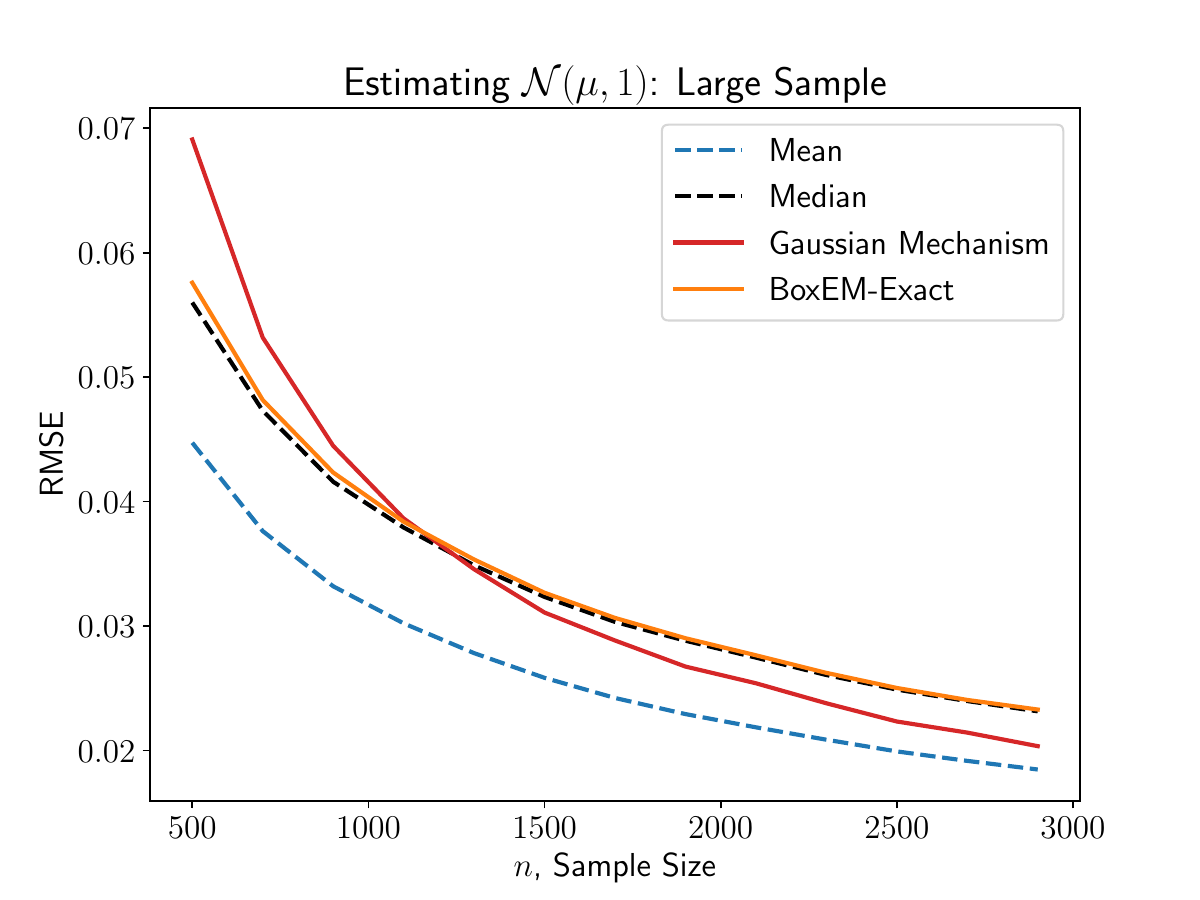}}
\caption{For large $n$, the error of the Gaussian mechanism approaches that of the empirical mean. Eventually, it is lower than the error of the median, and thus also outperforms  BoxEM.}
\label{fig:univariate_large_sample}
\end{figure}

\editblockdone

\editblock
\section{Accuracy Guarantees of BoxEM}\label{sec:BoxEM_accuracy}
In this section, we show that BoxEM's accuracy guarantees are slightly better than stated in~\cite{liu2021robust}. Overall, the proof follows closely the proof of Theorem 3.2 in BGSUZ and the relevant steps from the proof of Theorem 12 in~\cite{liu2021robust}. 
\begin{thm}\label{thm:BoxEM_accuracy}
Let $0<\alpha,\beta,\eps < 1$, $\Sigma: \id \preceq \Sigma\preceq \kappa \id$, $\mu\in[-R/2,R/2]^d$. Assume $R\gg \max\{\sqrt{\kappa}, \alpha\}$. If $x\sim \cN(\mu,\Sigma)^{\otimes n}$ and
    \begin{equation}
        n \ge C\left(\frac{d + \log (\frac{1}{\beta})}{\alpha^2}+ \frac{d + \log (\frac{1}{\alpha\eps\beta})}{\alpha\eps} + \frac{d\log(\frac{dR}{\alpha})}{\eps}\right),
    \end{equation}
    then with probability at least $1-\beta$, the Tukey Depth Mechanism (BoxEM) over the euclidean ball of radius $R$ returns $\hat{\mu}$ such that $\|\hat{\mu}-\mu\|_\Sigma \le O(\alpha)$.
\end{thm}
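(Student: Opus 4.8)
The plan is to adapt the proof of Theorem~\ref{thm:BGSUZ} (BGSUZ, Theorem 3.2) to the pure-DP, bounded-domain setting, replacing the PTR/restriction by the explicit normalization over the bounding set. By affine invariance of Tukey depth I would first reduce to the isotropic case: the map $z\mapsto\Sigma^{-1/2}(z-\mu)$ sends the data to $\mathcal{N}(0,\id)^{\otimes n}$, turns $\|\cdot\|_\Sigma$ into the Euclidean norm, and sends the bounding ball $B_R$ to an ellipsoid of volume $\mathrm{Vol}(B_R)/\sqrt{\det\Sigma}$ with semi-axes in $[R/\sqrt{\kappa},R]$. The assumption $R\gg\sqrt{\kappa}$ (together with the bound on $\mu$) ensures this set comfortably contains the convex hull of the data and the central high-depth region, so no relevant mass is clipped; all estimates are carried out in these coordinates and the Mahalanobis guarantee follows by undoing the map. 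Up to dimension-dependent constants the bounding set has volume $\Theta((cR)^d)$, which is the origin of the $\log(dR/\alpha)$ (the extra factor $d$ inside the log absorbing the $\sqrt d$ ball-volume constants).

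Next I would define the data good event via uniform convergence over halfspaces (VC dimension $d+1$): with probability $1-\beta$, every empirical Tukey-depth fraction is within $\gamma=O(\sqrt{(d+\log(1/\beta))/n})$ of its population value $\Phi(-r)$, where $r$ is the Euclidean distance of the query point to the origin. This yields two facts: (i) every point with $r\le\alpha$ has depth $\ge(\tfrac12-c\alpha)n$, so the level set $\mathcal{Y}_{\ge(\frac12-c\alpha)n}$ contains $B_\alpha$ and has volume $\ge\mathrm{Vol}(B_\alpha)$; and (ii) every point with $r\ge C\alpha$ has depth $\le(\tfrac12-c'\alpha)n=:\ell_0$. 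Demanding $\gamma\le c\alpha$, so that the $\Omega(\alpha n)$ depth gap survives the $O(\gamma n)$ fluctuation, is exactly what forces $n\gtrsim(d+\log(1/\beta))/\alpha^2$, the first term.

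The core of the argument bounds the failure probability, which under (ii) is at most $\Pr[\text{output depth}\le\ell_0]=\big(\sum_{\ell\le\ell_0}e^{\eps\ell/2}\mathrm{Vol}(\mathcal{Y}_{=\ell})\big)/\big(\sum_{\ell}e^{\eps\ell/2}\mathrm{Vol}(\mathcal{Y}_{=\ell})\big)$. I would lower-bound the denominator by the central contribution $e^{\eps(\frac12-c\alpha)n/2}\mathrm{Vol}(B_\alpha)$ and split the numerator in two. The $\ell=0$ term (points outside $\mathrm{conv}(x)$) carries the full volume $\le\mathrm{Vol}(B_R)$ but has a depth gap of a constant fraction of $n$, giving ratio $\le e^{-\Omega(\eps n)}(R/\alpha)^d$, which is $\le\beta/3$ once $n\gtrsim d\log(dR/\alpha)/\eps$ — the absence of any $1/\alpha$ factor here is the source of the improved third term. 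The terms $1\le\ell\le\ell_0$ have only an $\Omega(\alpha n)$ gap, but I would argue the weighted sum localizes to the top levels (distance $\Theta(\alpha)$), so its volume is only a constant-factor blow-up of $\mathrm{Vol}(B_\alpha)$; the ratio is then $e^{-\Omega(\eps\alpha n)}e^{O(d)}\le\beta/3$ once $n\gtrsim(d+\log(1/\alpha\eps\beta))/(\alpha\eps)$, the second term. Summing the three contributions and transforming back through $\Sigma^{1/2}$ finishes the proof.

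The step I expect to be the main obstacle is this localization for the middle levels. A crude bound — pulling out the largest weight $e^{\eps\ell_0/2}$ and bounding every $\mathrm{Vol}(\mathcal{Y}_{=\ell})$ by the volume of the whole convex hull, $\mathrm{Vol}(B_{r_{\max}})$ with $r_{\max}=O(\sqrt{d+\log n})$ — would introduce a spurious $\log(r_{\max}/\alpha)$ and inflate the second term by a $\log(d/\alpha)$ factor. Avoiding this requires controlling how fast $\mathrm{Vol}(\mathcal{Y}_{=\ell})$ can grow as $\ell$ decreases and showing that, once $n\gtrsim d/(\alpha\eps)$, the decay of $e^{\eps\ell/2}$ dominates this growth so that the sum concentrates near $\ell_0$ at distance $\Theta(\alpha)$. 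Obtaining this volume-growth estimate together with the anti-concentration lower bound on $\mathrm{Vol}(\mathcal{Y}_{\ge(\frac12-c\alpha)n})$, and interlocking the constants $C$ and $\gamma$ so that the depth gap survives the fluctuations, is the delicate bookkeeping — precisely where the analysis mirrors but must sharpen the corresponding steps in BGSUZ and Liu et al.~\cite{liu2021robust}.
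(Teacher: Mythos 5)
Your plan is essentially the paper's proof: condition on uniform convergence of empirical to population Tukey depth (the paper invokes BGSUZ Lemma~3.5 rather than re-deriving it from VC theory), lower-bound the exponential mechanism's denominator by the weight of a ball of radius $\Theta(\alpha)$ around $\mu$ (whose points provably have depth $\ge 3n/10$), charge the far region to the crude $(2R)^d$ volume bound against a constant-fraction depth gap — which is exactly what decouples $\log R$ from $1/\alpha$ — and handle the intermediate depths via the BGSUZ-style argument, yielding the three terms of the sample complexity. The one structural difference is where you place the split: you separate $\ell=0$ from $1\le\ell\le\ell_0$, which forces your ``middle levels'' to reach out to the convex hull (radius $O(\sqrt{d+\log n})$) and creates the localization obstacle you flag as the delicate step. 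The paper instead splits at depth $n/4$: the set $\mathtt{VERYBAD}=\{y: T_x(y)\le n/4\}$ absorbs the entire $(2R)^d$ volume against a depth gap of $\Omega(n)$ (cost $d\log(dR/\alpha)/\eps$, no $1/\alpha$), and the remaining $\mathtt{BAD}$ levels all correspond to points within constant Mahalanobis distance of $\mu$, so BGSUZ's Lemma~3.10 applies verbatim to give the $(d+\log(1/\alpha\eps\beta))/(\alpha\eps)$ term. With that choice of split the volume-growth-versus-exponential-decay estimate you worry about is exactly the one already carried out in BGSUZ, and nothing new needs to be sharpened; the paper simply imports it.
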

\begin{proof}
By~\cite[Lemma 3.5]{brown2021covariance}, there exists a constant $c$ such that for any $\alpha,\beta>0$ if 
\begin{equation}\label{eq:sc_typicality}
n \ge c\left(\frac{d + \log (1/\beta)}{\alpha^2}\right),
\end{equation}
if $x\sim \cN(\mu,\Sigma)^{\otimes n}$ then with probability at least $1-\beta$ for all $y\in\mathbb{R}^d$, $|T_x(y)/n-T_{\cN(\mu,\Sigma)}(y)|\le \alpha$. We condition on this event for the rest of the proof. 

Assume for simplicity that $n/4$ is an integer. Let $\mathtt{VERYBAD}$ be the set of points $y: T_x(y)\le \frac{n}{4}$.  We write $\Pr[y\in\mathtt{VERYBAD}]$ as a sum:
    \begin{align}\label{eq:prob_verybad}
        \Pr[y\in\mathtt{VERYBAD}] & = \sum_{\ell=0}^{n/4} \Pr\left[T_x(y)= \ell \right] \nonumber \\
        & \le \frac{\sum_{\ell=0}^{n/4} \Pr\left[T_x(y)= \ell \right]}{\Pr\left[T_x(y)\ge \frac{3n}{10} \right]} \nonumber\\
        & \le \frac{\sum_{\ell=0}^{n/4}V_{=\ell}\cdot e^{\eps\ell/2}}{V_{\ge \frac{3n}{10}}\cdot e^{3\eps n/20}} \nonumber\\
        & \le \frac{\min\{V_{\ge0}, (2R)^d\}\cdot e^{\eps n/8}}{V_{\ge \frac{3n}{10}}\cdot e^{3\eps n/20}} \nonumber \\
        & \le \frac{(2R)^d}{V_{\ge 3n/10}}\cdot e^{-\eps n/40}
    \end{align}

    By~\cite[Proposition 3.3]{brown2021covariance}, $T_{\cN(\mu,\Sigma)}(y)=\Phi(-\|y-\mu\|_\Sigma)$. Using this property and following the same steps as~\cite[Lemma J.2]{liu2021robust}, we get that $T_{\cN(\mu,\Sigma)}(y)\ge \frac{1}{2}-\frac{1}{\sqrt{2\pi}}\|y-\mu\|_\Sigma$. By our condition above, this implies $T_{x}(y)\ge n\left(\frac{1}{2}-\frac{1}{\sqrt{2\pi}}\|y-\mu\|_\Sigma-\alpha\right) \ge n\left(\frac{1}{2}-\frac{1}{\sqrt{2\pi}}\|y-\mu\|_2-\alpha\right)$, where the last inequality holds since $\id\preceq\Sigma$. 

    Thus for any point $y$ if $\|y-\mu\|_2\le \sqrt{2\pi}\alpha$ then $T_{x}(y)\ge n\left(\frac{1}{2}-2\alpha\right)\ge 3n/10$, by our assumption that $\alpha<1/10$. This implies that the volume $V_{\ge 3n/10}$ is at least that of a ball of radius $\sqrt{2\pi}\alpha$, which is $\frac{(\sqrt{2}\pi\alpha)^d}{\Gamma(d/2+1)}$.  Therefore Eq.~\eqref{eq:prob_verybad} becomes 
    \begin{align*}
        \Pr[y\in\mathtt{VERYBAD}] \le \left(\frac{2R}{\sqrt{2\pi}\alpha}\right)^d \cdot \Gamma(d/2+1) \cdot e^{-\frac{\eps n}{40}}
    \end{align*}
Recall that $\Gamma(d/2+1)=(d/2)!\le d^d$. We conclude that the probability $\Pr[y\in\mathtt{VERYBAD}]\le \beta$, as long as 
\begin{equation}\label{eq:sc_verybad}
    n \ge c'\left(\frac{d\log(dR/\alpha) + \log (1/\beta)}{\eps}\right),
\end{equation} 
for some constant $c'$.
Now, let $\mathtt{BAD}$ be the set of points $y: n/4 < T_x(y)\le \alpha_2<4\alpha$. Following the proof of Lemma 3.10~\cite{brown2021covariance} exactly, we get that $\Pr[y\in\mathtt{BAD}]\le \beta$, as long as 
\begin{equation}\label{eq:sc_bad}
    n \ge c''\left(\frac{d + \log (1/\alpha\eps\beta)}{\alpha\eps}\right),
\end{equation} 
for some constant $c''$.

Putting all conditions together, we conclude that if  
\begin{equation*}
    n \ge C\left(\frac{d + \log (\frac{1}{\beta})}{\alpha^2}+ \frac{d + \log (\frac{1}{\alpha\eps\beta})}{\alpha\eps} + \frac{d\log(\frac{dR}{\alpha}) + \log (\frac{1}{\beta})}{\eps}\right)
\end{equation*}
then  $\Pr[T_x(y)>\frac{1}{2}-4\alpha]\le 3\beta$. 
By our condition, this implies with the probability at least $1-3\beta$,
\begin{equation}
        T_{\cN(\mu,\Sigma)}(y) \ge \frac{1}{2} -\alpha -4\alpha = \frac{1}{2} - 5\alpha.
\end{equation}
Recall $\Phi\left(-\|y-\mu\|_{\Sigma}\right) =T_{\cN(\mu,\Sigma)}(y)$.
By definition, $\Phi(-z)=\frac{1}{2}-\frac{1}{2}\mathrm{Erf}\left(\frac{z}{\sqrt{2}}\right)$. It is easy to see that $\mathrm{Erf}(x)\geq \mathrm{Erf}(1)\cdot x\geq 0.84x$ for $x\in[0,1]$. It follows that 
\begin{equation*}
    \Phi\left(-z\right)\leq \frac{1}{2}-\frac{0.84z}{2\sqrt{2}}.
\end{equation*}
Combining the above inequalities, we have that $\|y-\mu\|_{\Sigma}\le \frac{10\sqrt{2}}{0.84}\alpha\leq 17\alpha$.
\end{proof}

In comparison, ~\cite{liu2021robust} prove that the sample complexity of BoxEM is $n \ge C\left(\frac{d + \log (\frac{1}{\beta})}{\alpha^2}+ \frac{d\log(\frac{dR}{\alpha\beta})}{\alpha\eps}\right)$. The improved sample complexity above ``decouples'' the $\log(dR)$ factor from the $1/\alpha$ factor.  

\end{document}